\newcolumntype{b}{>{\hsize=1.3\hsize}X}
\newcolumntype{s}{>{\hsize=.2\hsize}X}
\newcolumntype{m}{>{\hsize=.35\hsize}X}
\newcolumntype{l}{>{\hsize=.5\hsize}X}
\newcolumntype{x}{>{\hsize=.45\hsize}X}
\newcolumntype{g}{>{\hsize=0.7\hsize}X}
\newcommand{\dataset}{\mathcal{D}}
\newcommand{\bellman}{\mathcal{B}}
\newcommand{\states}{\mathcal{S}}
\newcommand{\actions}{\mathcal{A}}
\newcommand{\E}{\mathop{\mathbb{E}}}
\newcommand{\F}{\mathcal{F}}
\newcommand{\Lphi}{\mathcal{L}_\Phi}
\newcommand{\Lpsi}{\mathcal{L}_\Psi}
\newcommand{\hatLpsi}{\hat{\mathcal{L}}_\Psi}
\newcommand{\hatLphi}{\hat{\mathcal{L}}_\Phi}
\newcommand{\ploff}{\textsc{PLOff}}
\newtheorem{definition}{Definition}
\icmltitlerunning{Offline Reinforcement Learning with Pseudometric Learning}
\begin{document}

\twocolumn[
\icmltitle{Offline Reinforcement Learning with Pseudometric Learning}

\begin{icmlauthorlist}
\icmlauthor{Robert Dadashi}{brain}
\icmlauthor{Shideh Rezaeifar}{geneva}
\icmlauthor{Nino Vieillard}{brain,lorraine}
\icmlauthor{L\'eonard Hussenot}{brain,lille}
\icmlauthor{Olivier Pietquin}{brain}
\icmlauthor{Matthieu Geist}{brain}
\end{icmlauthorlist}

\icmlaffiliation{brain}{Google Research, Brain Team}
\icmlaffiliation{geneva}{University of Geneva}
\icmlaffiliation{lorraine}{Universit\'e de Lorraine, CNRS, Inria, IECL, F-54000 Nancy, France}
\icmlaffiliation{lille}{Univ. de Lille, CNRS, Inria Scool, UMR 9189 CRIStAL}

\icmlcorrespondingauthor{Robert Dadashi}{dadashi@google.com}

\icmlkeywords{Machine Learning, ICML}

\vskip 0.3in
]

\printAffiliationsAndNotice{} 

\begin{abstract}
\looseness=-1
Offline Reinforcement Learning methods seek to learn a policy from logged transitions of an environment, without any interaction. In the presence of function approximation, and under the assumption of limited coverage of the state-action space of the environment, it is necessary to enforce the policy to visit state-action pairs \emph{close} to the support of logged transitions. In this work, we propose an iterative procedure to learn a pseudometric (closely related to bisimulation metrics) from logged transitions, and use it to define this notion of closeness. We show its convergence and extend it to the function approximation setting. We then use this pseudometric to define a new lookup based bonus in an actor-critic algorithm: \ploff{}. This bonus encourages the actor to stay close, in terms of the defined pseudometric, to the support of logged transitions. Finally, we evaluate the method on hand manipulation and locomotion tasks.
\end{abstract}

\section{Introduction}
\label{sec:intro}
Reinforcement Learning (RL) has proven its ability to solve complex problems in recent years \citep{go,starcraft}. Behind those breakthroughs, the adoption of RL in real-world systems remains challenging \citep{rlchallenges}. Learning a policy by trial-and-error, while operating on the system,  can be detrimental to the system where it is deployed (\textit{e.g.}, user satisfaction in recommendation systems or material damage in robotics) and is not guaranteed to lead to good performance (sparse rewards problems). 

Nevertheless, in the setting where experiences were previously collected in an environment (\emph{logged transitions}), one possible way of learning a policy is to mimic the policy that generated these experiences \citep{bc}. However, if these experiences come from different sources, with different degrees of desirability, naive imitation might lead to poor results. On the other hand, offline RL (or batch RL)~\citep{lagoudakis2003least,ernst2005tree,riedmiller2005neural,levine2020offline}, offers a setting where the policy is learned from the collected experiences. As it requires no interaction with the environment, offline RL is a promising direction for learning policies that can be deployed into systems. 

Still,  collected experiences typically only cover a subset of the range of possibilities in an environment (not every state is present; for a given state, not  every action is taken). With function approximation, this is particularly problematic since actions not executed in the system can be assigned overly optimistic values (especially through bootstrapping), which leads to poor policies. To limit this extrapolation error, offline RL is typically  incentivized to learn policies that are plausible in light of the collected experiences. In other words, offline RL methods need to learn policies that maximize their return, while making sure to remain \emph{close} to the support of logged transitions. 

\looseness=-1
This work introduces a new method for computing the closeness to the support by learning a pseudometric from collected experiences. This pseudometric, close to bisimulation metrics \citep{ferns2004metrics}, computes similarity between state-action pairs based on the expected difference in rewards when following specific sequences of actions. 
We show theoretical properties in the dynamic programming setting for deterministic environments as well as for the sampled setting. We further extend the learning of this pseudometric to the function approximation setting, and propose an architecture to learn it from collected experience. We define a new offline RL actor-critic  algorithm: \ploff{} (\textbf{P}seudometric \textbf{L}earning \textbf{Off}line RL), which computes a bonus through this learned pseudometric and uses it to filter admissible actions in the greedy step and penalizes non-admissible actions in the evaluation step. Finally, we lead an empirical study  on the hand manipulation and locomotion tasks of the D4RL benchmark from \citet{fu2020d4rl}.

We make the following contributions: 1) we extend neural bisimulation metrics \citep{castro2020scalable} to state-action spaces and to the offline RL setting and 2) we exploit this pseudometric to tackle the out-of-distribution extrapolation error of offline RL by adding a simple lookup bonus to a standard actor-critic algorithm and show that it compares favorably to state-of-the art offline RL methods.
\section{Background}
\label{sec:background}

\paragraph{Reinforcement Learning.} We consider the classic RL setting~\citep{suttonbarto}, formalized with Markov Decision Processes (MDPs). An MDP is a tuple $\mathcal{M} := (\states, \actions, r, P, \gamma)$, with $\states$ the state space, $\actions$ the action space, $r: \states \times \actions \mapsto \mathbb{R}$ the expected reward function, $P : \states \times \actions \mapsto \mathcal{P}(\states)$ the transition function which maps state-action pairs to distributions over the set of states $\mathcal{P}(\states)$ and $\gamma$ the discount factor for which we assume $\gamma \in [0, 1)$. A stationary deterministic policy $\pi$ is a mapping from states to actions (the following can easily be extended to stochastic policies). The value function $V^\pi$ of a policy $\pi$ is defined as the expected discounted cumulative reward from starting in a particular state and acting according to $\pi$:
$V^\pi(s) = \E \big(\sum^{\infty}_{i=0}  \gamma^i r(s_i, \pi(s_i)) | s_0 = s)\big)$. 
The action-value function $Q^\pi$ is defined as the expected cumulative reward from starting in a particular state, taking an action and then acting according to $\pi$:
$Q^\pi(s, a) = r(s, a) + \gamma \E \big(V^\pi(s')\big)$. 
The~\citet{bellman} operator $\bellman$  connects an action-value function $Q$ for the state-action pair $(s,a)$ to the action-value function in the subsequent state $s'$: 
$\bellman^\pi (Q)(s, a) := r(s, a) + \gamma \E\big( Q(s', \pi(s'))\big)$. 
$Q^\pi$ is the (unique) fixed-point of this operator, and the difference between $Q(s,a)$ and its image through the Bellman operator $\|Q - \bellman^\pi Q \|$ is called a temporal difference error.\\

An optimal policy $\pi^*$ maximizes the value function $V^{\pi^*}$ for all states. In continuous state-action spaces, actor-critic methods \citep{konda} are a common paradigm to learn a near-optimal policy. In this work we only consider deterministic policies; we justify this restriction by the fact that stochastic policies are desirable because of their side effect of exploration \citep{pmlr-v80-haarnoja18b}, but in this case we want to learn a policy with near-optimal behavior without interaction with the environment. Therefore, we use the actor-critic framework of \citet{silver2014deterministic}. It consists in concurrently learning a parametrized policy $\pi_\theta$ and its associated parametrized action-value function $Q_\omega$. $Q_\omega$ minimizes a temporal difference error $\|Q_\omega(s, a) - r(s,a) - Q_{\bar{\omega}}(s', \pi_\theta(s')) \|$ (with $Q_{\bar{\omega}}$ a target action-value function, tracking $Q_w$), and $\pi_\theta$ maximizes the action-value function $Q_\omega(s, \pi_\theta(s))$.\\

In the classical RL setting, transitions $(s, a, s', r)$ are sampled through interactions with the environment. In on-policy actor-critic methods \citep{Sutton1999PolicyGM,Schulman2015TrustRP}, updates on $\pi_\theta$ and $Q_\omega$ are made as the policy gathers transitions by interacting in the environment. In off-policy actor-critic methods \citep{lillicrap2019continuous,pmlr-v80-haarnoja18b,fujimoto2019off}, the transitions gathered by the policy are stored in a replay buffer and sampled using different sampling strategies \citep{experiencereplay,schaul2015prioritized}. These off-policy methods extend to the offline RL setting quite naturally. The difference is that in the offline RL setting, transitions are not sampled through interactions from the environment, but from a fixed dataset of transitions. 

\looseness=-1
Throughout the paper $\dataset = \{(s_i, a_i, r_i, s'_i)\}_{1:N}$ is the dataset of $N$ transitions  collected in the considered environment. To ease notations we  write $s \sim \dataset$, $s, a \sim \dataset$, $r \sim \dataset$ to indicate that a transition $(s, a, s', r)$ is sampled at random from this dataset, and that we only consider the associated state $s$, state-action pair $(s, a)$ or reward $r$ respectively.

\paragraph{Pseudometric in MDPs.} A core issue in RL is to define a meaningful metric between states or state-action pairs \citep{Lan2021MetricsAC}. Consider for example a maze, two states could be close according to the Euclidean distance, but far away in terms of the minimal distance an agent would have to travel to join one state from the other (due to walls). In this case, a relevant metric is the distance in the graph formed from state transitions induced by the MDP \citep{protovaluefunctions}. We consider the following relaxed notion of metric\footnote{A metric is a pseudometric for which $d(x, y) = 0 \Rightarrow x = y.$}:
\begin{definition}[Pseudometric]
\label{def:pseudometric}
Given a set $M$, a pseudometric is a function $d : M \times M \mapsto \mathbb{R}_+$ such that, $\forall x, y, z \in M$, we have $d(x; x) = 0$, $d(x; y) = d(y; x)$,   $d(x; z) \leq d(x; y) + d(y; z)$.
\end{definition}

In the context of Markov Decision Processes, bisimulation relations \citep{givan2003equivalence} are a form of state abstraction \citep{li2006towards}, based on an equivalence relation. They are defined by the following recurrent definition: two states are bisimilar if they yield the same expected reward and transition to bisimulation equivalence classes with equal probability. This definition is too restrictive to be useful in practice. \citet{ferns2004metrics} introduce bisimulation metrics which are pseudometrics that soften the concept of bisimulation relations. Bisimulation metrics are defined on the state space $\states$. Denote $\mathbb{M}_\states$ the set of pseudometrics  on $\states$. The bisimulation metric is the (unique) fixed point of the operator $\mathcal{F_\states}$ defined as:
\begin{align}
\F_\states & (d)(s; t) := \\
&\max_{a \in \actions} \Big(|r(s, a) - r(t, a)| + \gamma \mathcal{W}_1(d)(P(s, a),P(t, a))\Big)
\end{align}
where $\mathcal{W}_1(d)$ is the 1-Wasserstein distance \citep{villani} with the distance between states measured according to the pseudometric $d$. Therefore, the bisimulation metric is the limit of the repeated application of the sequence $\big(\mathcal{F_\states}^n(d_0)\big)_{n \in \mathbb{N}}$ for any initial $d_0 \in \mathbb{M_\states}$.

Although pseudometrics in MDPs have proven to be effective in some applications \citep{melo,gmmil,pwil}, they are usually hand-crafted or learned with ad-hoc strategies.

\section{Method\label{sec:method}}
We present the overall idea of our method in Sec. \ref{sec:lookup_bonus}: an offline RL algorithm which is incentivized to remain close to the support of collected experiences using a pseudometric-based bonus. In Sec. \ref{sec:pseudometric_learning} to \ref{sec:approx}, we present how to learn this pseudometric, from a theoretical motivation to a gradient-based method. We give practical considerations to derive the bonus in Sec. \ref{sec:tractable} and provide the resulting algorithm: \ploff{}, in Sec. \ref{sec:algorithm}.

\subsection{Offline RL with lookup bonus}
\label{sec:lookup_bonus}
For the time being, let us assume  the existence of a pseudometric $d$ on the state-action space  $\states \times \actions$. We can infer a distance $d_\dataset$ from a transition to the dataset $\dataset$: 
\begin{align}
    d_\dataset(s, a) = \min_{\hat{s}, \hat{a} \in \dataset} d(s, a; \hat{s}, \hat{a}).
\end{align}

This distance to the dataset $\dataset$, also referred to as the projection distance, is simply the distance from $(s, a)$ to the nearest element of $\dataset$. It is central to our work since it defines the notion of closeness to the support of transitions $\dataset$. From $d_\dataset$, we can infer a bonus $b$ using a monotonically decreasing function $f : \mathbb{R} \mapsto \mathbb{R}$: 
\begin{align}
    b(s, a) = f(d_\dataset(s, a)).
\end{align}

Note that the concept of a bonus is overloaded in RL; it typically applies to exploration strategies \citep{jurgen,thrun,Bellemare2016UnifyingCE}. In our case, the bonus $b$ is opposite to exploration-based bonuses since it will encourage the policy to act similarly to existing transitions of the collected experiences $\dataset$. In other words, it prevents exploring too far from the dataset. 

We adapt the actor-critic framework by adding the bonus to the actor maximization step and the critic minimization step (with difference multipliers $\alpha_a$ and $\alpha_c$). We learn a parametrized policy $\pi_\theta$ and its corresponding action-value function $Q_\omega$. In a schematic way, we sample transitions $(s, a, r, s') \in \dataset$ and minimize the two following losses:
\begin{align}
    \text{(critic)} \;\; \min_\omega \|&Q_\omega(s, a) - r(s,a) \\ &- \gamma Q_{\bar{\omega}}(s', \pi_\theta(s')) - {\color{blue}\alpha_c b(s', \pi_\theta(s'))}\|, \label{eq:critic_step}
\end{align}
\begin{align}
\text{(actor)} \;\; \max_\theta Q_\omega(s, \pi_\theta(s)) + {\color{blue} \alpha_a b(s, \pi_\theta(s))} \label{eq:actor_step}.
\end{align}

This modification of the actor-critic framework is common in offline RL \citep{buckman2020importance}. The bonus $b$ typically consists in a measure of similarity between an estimated behavior policy that generated $\dataset$ and the policy $\pi$ we learn \citep{pmlr-v80-fujimoto18a,wu2019behavior,kumar2019stabilizing}.

\subsection{Pseudometric Learning}
\label{sec:pseudometric_learning}
To define a bonus $b$, we first learn a pseudometric $d$ on the state-action space $\states \times \actions$ similarly to bisimulation metrics \citep{ferns2004metrics,ferns2011bisimulation,ferns2012methods}, with the difference being that we are interested in pseudometrics in state-action space $\states \times \actions$ rather than state space $\states$.
We will show that the pseudometric $d$ we are interested in is the fixed point of an operator $\F$. In the following, we assume that the MDP is deterministic.

Let $\mathbb{M}$ be the set of bounded pseudometrics on $\states \times \actions$. We define the operator $\F: \mathbb{M} \mapsto \mathbb{M}$ as follows: for two state-action pairs $(s_1, a_1)$ and $(s_2, a_2)$, that maps to next states $s'_1$ and $s'_2$ we have:
\begin{align}
    \F(d)&(s_1,a_1;s_2,a_2) := \\
    &|r(s_1, a_1) - r(s_2, a_2)| + {\gamma\E}_{a' \sim \mathcal{U}(\actions)} d(s'_1,a';s'_2,a'),
\end{align}
with $\mathcal{U}(\actions)$ the uniform distribution over actions.

This operator is of particular interest as it takes a distance $d$ over state-action pairs as input, and outputs a distance $\F(d)$ which is the distance between immediate rewards, plus the discounted expected distance between the two transitioning states for a random action. Notice that contrary to bisimulation metrics, we do not use a maximum over next actions for multiple reasons: the use of a maximum can be overly pessimistic when computing the similarity \citep{castro2020scalable}; in the case of continuous action spaces a maximum is hard to estimate; and finally in the presence of function approximation (Section \ref{sec:approx}) it can lead to instabilities. 

We now establish a series of properties of the operator $\mathcal{F}$, all being proven in Appendix \ref{sec:proofs}.

\begin{restatable}{proposition}{stability}
Let $d$ be a pseudometric in $\mathbb{M}$, then $\F(d)$ is a pseudometric in $\mathbb{M}$.
\end{restatable}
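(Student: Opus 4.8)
The plan is to verify directly, from the definition of $\F$, the four conditions that make $\F(d)$ a member of $\mathbb{M}$: nonnegativity, reflexivity ($\F(d)(x;x)=0$), symmetry, and the triangle inequality, plus boundedness. Throughout I fix three state-action pairs $x=(s_1,a_1)$, $y=(s_2,a_2)$, $z=(s_3,a_3)$ and write $s'_1,s'_2,s'_3$ for their successor states, which are well defined because the MDP is assumed deterministic. The whole argument rests on one structural feature of $\F$: it compares the two successors under the \emph{same} sampled action $a'$, i.e. through the term $d(s'_1,a';s'_2,a')$. This shared-action coupling is what allows the pseudometric properties of $d$ to pass through to $\F(d)$.

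Nonnegativity is immediate, since $|r(s_1,a_1)-r(s_2,a_2)|\ge 0$, $d\ge 0$, and $\gamma\ge 0$. For reflexivity, putting $x=y$ makes the reward term vanish, and since $d(s'_1,a';s'_1,a')=0$ for every $a'$ (as $d$ is a pseudometric) the expectation vanishes as well, giving $\F(d)(x;x)=0$. Symmetry follows because $|r(s_1,a_1)-r(s_2,a_2)|$ is symmetric in its arguments, and for each fixed $a'$ we have $d(s'_1,a';s'_2,a')=d(s'_2,a';s'_1,a')$ by symmetry of $d$; taking the expectation over $a'\sim\mathcal{U}(\actions)$ preserves the equality.

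The triangle inequality is the only step requiring a short argument, though it too is routine. The reward term obeys $|r(s_1,a_1)-r(s_3,a_3)|\le |r(s_1,a_1)-r(s_2,a_2)|+|r(s_2,a_2)-r(s_3,a_3)|$ by the triangle inequality on $\mathbb{R}$. For the transition term, I apply the triangle inequality of $d$ \emph{pointwise} in $a'$, inserting the intermediate point $(s'_2,a')$: $d(s'_1,a';s'_3,a')\le d(s'_1,a';s'_2,a')+d(s'_2,a';s'_3,a')$. Since this holds for every $a'$, monotonicity and linearity of the expectation $\E_{a'\sim\mathcal{U}(\actions)}$ preserve it after integrating. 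Multiplying by $\gamma\ge 0$ and adding the reward bound yields $\F(d)(x;z)\le \F(d)(x;y)+\F(d)(y;z)$.

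Finally, $\F(d)$ is bounded: if $d\le D$ and the rewards are bounded by $\|r\|_\infty$ (the standard assumption), then $\F(d)\le 2\|r\|_\infty+\gamma D<\infty$, so $\F(d)\in\mathbb{M}$. I do not anticipate any real obstacle, as this is a direct verification. The one point worth flagging is that evaluating $d$ at the common action $a'$ on both branches, rather than at two independently chosen actions, is precisely what lets the triangle inequality route through the shared intermediate point $(s'_2,a')$; this coupling is the structural reason the argument closes.
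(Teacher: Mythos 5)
Your proof is correct and follows essentially the same route as the paper's: a direct verification of reflexivity, symmetry, and the triangle inequality, with the triangle inequality passing through the intermediate pair $(s'_2, a')$ pointwise in the sampled action before taking the expectation, exactly as the paper does. Your additional checks of nonnegativity and boundedness (which the paper leaves implicit, though boundedness is needed for membership in $\mathbb{M}$) are a minor, welcome tightening rather than a different approach.
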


This result indicates the stability of a pseudometric through the operator $\F$ which is not trivial. The stability comes from the deterministic nature of the MDP as well as the fact that the boostrapped estimate is computed for the \emph{same} action at the next states. We are now interested in the repeated application of this operator $\big(\mathcal{F}^n(d_0)\big)_{n \in \mathbb{N}}$ starting from the $0$-pseudometric $d_0$ (mapping all pairs to 0).\\

\begin{restatable}{proposition}{contraction}
Let $d$ be a pseudometric in $\mathbb{M}$. We note $\|d\|_\infty$ as $\max_{s, s' \in \states} \max_{a, a' \in \actions} d(s, a; s', a')$. The operator $\F$ is a $\gamma$-contraction for $\|\cdot \|_\infty$.
\end{restatable}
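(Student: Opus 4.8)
The plan is to establish the Lipschitz inequality $\|\F(d_1) - \F(d_2)\|_\infty \le \gamma \|d_1 - d_2\|_\infty$ for all $d_1, d_2 \in \mathbb{M}$, from which the $\gamma$-contraction property follows immediately since $\gamma \in [0,1)$. Throughout I read $\|\cdot\|_\infty$ as the supremum norm on bounded real functions over $(\states \times \actions)^2$; for a genuine pseudometric this agrees with the stated definition (a pseudometric being nonnegative), but it also applies to the difference $d_1 - d_2$, which is a bounded function though not in general a pseudometric. By the preceding stability proposition, $\F(d_1)$ and $\F(d_2)$ both lie in $\mathbb{M}$, so all quantities below are finite and the norms are well-defined.

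First I would fix an arbitrary pair of state-action pairs, abbreviated $u := (s_1, a_1; s_2, a_2)$, and denote by $s'_1, s'_2$ the deterministic successors of $(s_1,a_1)$ and $(s_2,a_2)$, writing $u' := (s'_1, a'; s'_2, a')$. Subtracting the two definitions of $\F$ evaluated at $u$, the reward terms $|r(s_1,a_1) - r(s_2,a_2)|$ are identical and cancel exactly, leaving
\begin{align}
\F(d_1)(u) - \F(d_2)(u) = \gamma\,\E_{a' \sim \mathcal{U}(\actions)}\big[d_1(u') - d_2(u')\big].
\end{align}
This cancellation is the crux of the argument: because the reward difference enters $\F$ additively and independently of $d$, the entire dependence of $\F$ on its argument is carried by the single, \emph{linear} expectation term.

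Next I would take absolute values, move them inside the expectation by the triangle inequality, and bound the integrand uniformly by the supremum norm:
\begin{align}
&\big|\F(d_1)(u) - \F(d_2)(u)\big| \\
&\quad \le \gamma\,\E_{a' \sim \mathcal{U}(\actions)}\big|d_1(u') - d_2(u')\big| \\
&\quad \le \gamma \|d_1 - d_2\|_\infty,
\end{align}
where the last step replaces each integrand value by its supremum and uses $\E_{a'}[1] = 1$. Since the right-hand side is independent of the chosen $u$, I would finish by taking the supremum over all pairs $u$, which yields $\|\F(d_1) - \F(d_2)\|_\infty \le \gamma \|d_1 - d_2\|_\infty$.

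I expect no serious analytic obstacle here: this is a textbook Banach-style estimate, and the reward cancellation together with the linearity of the expectation makes the bound immediate. The only points requiring care are notational rather than substantive---verifying that interpreting $\|\cdot\|_\infty$ on the non-pseudometric difference $d_1 - d_2$ is legitimate, and that the absolute value can be passed through the expectation symmetrically in $d_1$ and $d_2$. Combined with this contraction and the completeness of the space of bounded pseudometrics under $\|\cdot\|_\infty$, Banach's fixed-point theorem would then give existence and uniqueness of the fixed point of $\F$, i.e. the pseudometric $d$ the method seeks.
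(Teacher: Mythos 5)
Your proposal is correct and follows essentially the same route as the paper's proof: cancel the reward terms, pull the absolute value inside the expectation over uniformly sampled actions, bound the integrand by $\|d_1 - d_2\|_\infty$, and take the supremum over all state-action pairs. Your additional care about applying $\|\cdot\|_\infty$ to the non-pseudometric difference $d_1 - d_2$ is a valid observation that the paper glosses over, but it does not change the argument.
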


Since the operator $\mathcal{F}$ is a contraction, it follows that the sequence $\big(\mathcal{F}^n(d_0)\big)_{n \in \mathbb{N}}$ converges to the pseudometric of interest $d^*$ (it would for any initial pseudometric, but $d_0$ is of particular empirical interest).\\

\begin{restatable}{proposition}{fixedpoint}
$\F$ has a unique fixed point $d^*$ in $\mathbb{M}$. Suppose $d_0 \in \mathbb{M}$ then $\lim_{n \to \infty} \F^n(d_0) = d^*$.
\end{restatable}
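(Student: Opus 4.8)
The plan is to recognize this statement as a direct application of the Banach fixed-point theorem, using the two preceding propositions as inputs: the stability result guarantees that $\F$ maps $\mathbb{M}$ into itself, and the contraction result guarantees that $\F$ is a $\gamma$-contraction for $\|\cdot\|_\infty$ with $\gamma \in [0,1)$. The only genuine work left is to verify that $(\mathbb{M}, \|\cdot\|_\infty)$ is a complete metric space, since Banach's theorem requires completeness of the underlying space; everything else is bookkeeping.

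First I would observe that the space $B$ of bounded real-valued functions on $(\states \times \actions) \times (\states \times \actions)$, equipped with the sup norm $\|\cdot\|_\infty$, is a Banach space and hence complete. The set $\mathbb{M}$ of bounded pseudometrics is a subset of $B$, so to obtain completeness of $\mathbb{M}$ it suffices to show that $\mathbb{M}$ is a \emph{closed} subset of $B$.

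Second, I would check closedness by taking a sequence $(d_n)$ of pseudometrics in $\mathbb{M}$ converging in $\|\cdot\|_\infty$ to some $d \in B$, and verifying that $d$ is again a pseudometric. Each defining property in Definition \ref{def:pseudometric} is an equality or a non-strict inequality, all of which are preserved under pointwise limits (and uniform convergence implies pointwise convergence): from $d_n(x;x) = 0$ we get $d(x;x) = 0$; from $d_n(x;y) = d_n(y;x)$ we get $d(x;y) = d(y;x)$; and from $d_n(x;z) \le d_n(x;y) + d_n(y;z)$ we get, by passing to the limit on both sides, $d(x;z) \le d(x;y) + d(y;z)$. Non-negativity is preserved in the same way, and boundedness of the limit follows from $\|\cdot\|_\infty$-convergence. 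Hence $\mathbb{M}$ is closed in $B$ and therefore complete.

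Finally, with $\mathbb{M}$ complete and $\F : \mathbb{M} \mapsto \mathbb{M}$ a $\gamma$-contraction, the Banach fixed-point theorem applies and delivers both claims simultaneously: $\F$ admits a unique fixed point $d^* \in \mathbb{M}$, and the iterates $\F^n(d_0)$ converge to $d^*$ for any initial $d_0 \in \mathbb{M}$, with the geometric rate $\|\F^n(d_0) - d^*\|_\infty \le \gamma^n \|d_0 - d^*\|_\infty$ coming for free. The main obstacle, and really the only nontrivial step, is the completeness verification above; the rest consists in assembling the two previously established propositions into the hypotheses of Banach's theorem.
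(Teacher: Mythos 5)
Your proposal is correct and follows essentially the same route as the paper: both invoke the Banach fixed-point theorem on $(\mathbb{M}, \|\cdot\|_\infty)$ using the previously established facts that $\F$ maps $\mathbb{M}$ to itself and is a $\gamma$-contraction. You go one step further than the paper by explicitly verifying the completeness hypothesis—showing $\mathbb{M}$ is a closed subset of the Banach space of bounded functions on $(\states \times \actions) \times (\states \times \actions)$—a detail the paper's proof takes for granted (and where it also misstates the underlying metric space, writing the set of state-action pairs rather than the set of bounded pseudometrics over it), so your version is, if anything, the more complete argument.
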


The fixed point of $\F$ can be thought of as the similarity between state-actions, measured by the difference in immediate rewards added to the difference in rewards in future states if the sequence of actions is selected uniformly at random.
In other words, two state-action pairs will be close if 1) they yield the same immediate reward and 2) following a random walk from the resulting transiting states yields a similar return.

\subsection{Pseudometric learning with sampling}
\label{sec:sampling}

Now, we move to the more realistic setting where the rewards and dynamics of the MDP are not known. Thus, the MDP is available through sampled transitions. We assume in this section that we have a finite state-action space. We define an operator $\hat{\F}$, which is a sampled version of $\F$: suppose we sample a pair of transitions from the environment $(\hat{s}_1, \hat{a}_1, \hat{s}'_1, \hat{r}_1), (\hat{s}_2, \hat{a}_2, \hat{s}'_2, \hat{r}_2)$, we have:
\begin{align}
  \hat{F}&(d)(s_1,a_1;s_2,a_2) = \\
  &\left\{\begin{aligned}
    &\left|\hat{r}_1-\hat{r}_2 \right| + \begin{aligned}[t] &{\gamma\E}_{u' \sim \mathcal{U}(\actions)}\, d(\hat{s}'_1,u',\hat{s}'_2, u') \\
    &\text{if } s_1, a_1, s_2, a_2 = \hat{s}_1, \hat{a}_1, \hat{s}_2, \hat{a}_2,
        \end{aligned}\\
    &d(s_1,a_1;s_2,a_2)\;\; \text{otherwise.}
    \end{aligned}\right.
\end{align}
Similarly to what is observed in the context of bisimulation metrics by \citet{castro2020scalable}, the sampled version $\hat{F}$ has similar convergence properties as $\F$.\\

\begin{restatable}{proposition}{sampleconvergence}
Suppose sufficient coverage of the state-action space: $\exists \epsilon > 0$ such that for any pairs of state-action pairs $(s, a), (\hat{s}, \hat{a}) \in (\states \times \actions) \times (\states \times \actions)$, $(s, a), (\hat{s}, \hat{a})$ is sampled with at least probability $\epsilon$, then the repeated application of $\hat{\F}$ converges to the fixed point $d^*$ of $\F$.
\end{restatable}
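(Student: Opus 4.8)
The plan is to recognize that, because the MDP is deterministic and the inner expectation $\E_{u'\sim\mathcal{U}(\actions)}$ over the finite action set is evaluated exactly, the sampled operator $\hat{\F}$ carries \emph{no} approximation noise: whenever the pair $(s_1,a_1;s_2,a_2)$ is drawn, the observed $\hat r_1,\hat r_2$ and $\hat s'_1,\hat s'_2$ are exactly $r(s_1,a_1),r(s_2,a_2)$ and the deterministic successors, so the refreshed entry equals $\F(d)(s_1,a_1;s_2,a_2)$ precisely. Thus $\hat{\F}$ is an \emph{asynchronous} version of $\F$: at each step a single coordinate of the (finite) table of pairwise distances is set to its $\F$-image while all other coordinates are frozen, and the only randomness is which coordinate is refreshed. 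This lets me avoid stochastic-approximation step-size machinery and argue purely through the sup-norm contraction of $\F$ established in the previous proposition.

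First I would set up the error sequence. Let $d_n$ be the iterate after $n$ applications of $\hat{\F}$, let $K=|\states\times\actions|^2$ be the number of coordinates, and set $M_n=\|d_n-d^*\|_\infty$. The first key step is a monotonicity lemma: $M_n$ is non-increasing. Indeed, if coordinate $i$ is refreshed at step $n$, then using $\F(d^*)=d^*$ and the $\gamma$-contraction of $\F$ for $\|\cdot\|_\infty$, its new error satisfies $|d_{n+1}(i)-d^*(i)|=|\F(d_n)(i)-\F(d^*)(i)|\le\gamma\|d_n-d^*\|_\infty=\gamma M_n\le M_n$, while every other coordinate keeps its previous error $\le M_n$; taking the maximum gives $M_{n+1}\le M_n$.

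The second step is a per-sweep contraction. Call a block of consecutive steps a \emph{sweep} if during it every one of the $K$ coordinates is refreshed at least once. Fix a sweep starting at time $n$, where $M_n=M$. By monotonicity $M_m\le M$ for every $m\ge n$ inside the sweep, so the first time a coordinate $i$ is refreshed (at some $m\ge n$) its error drops to $\le\gamma M_m\le\gamma M$, and it stays $\le\gamma M$ thereafter (further refreshes only re-apply the same bound, and intervening steps leave $i$ untouched). Since by the end of the sweep every coordinate has been refreshed at least once, all errors are $\le\gamma M$, i.e. the sweep contracts the error to $\gamma M$.

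Finally I would invoke the coverage assumption to guarantee infinitely many finite sweeps almost surely. Since each of the $K$ coordinates is sampled with probability at least $\epsilon>0$ at every step, the probability that a given coordinate is missed for $L$ consecutive steps is at most $(1-\epsilon)^L\to 0$; a coupon-collector argument then shows that, almost surely, each sweep (defined greedily as ending once all coordinates have been touched) terminates in finite random time, so successive sweeps form an infinite sequence of finite blocks. Chaining the per-sweep contraction over the $m$-th sweep yields $M\le\gamma^m M_0\to 0$, hence $d_n\to d^*$ almost surely. The main obstacle is the asynchronous bookkeeping in the sweep lemma — ensuring that a coordinate's error, once reduced to $\gamma M$ within a sweep, cannot re-inflate before the sweep ends — which is exactly why the monotonicity of $M_n$ must be proved first and used to bound every intermediate $M_m$ by the value $M$ at the sweep's start.
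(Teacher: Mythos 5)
Your proposal is correct, and it reaches the conclusion by a genuinely different route than the paper. The paper's proof is a two-line citation: it observes, exactly as you do, that determinism makes $\hat{\F}$ a noise-free \emph{asynchronous} application of the contraction $\F$ (each step sets one coordinate of the finite table to its exact $\F$-image), notes that the coverage assumption implies every coordinate is refreshed infinitely often almost surely, and then invokes Proposition 3 of Bertsekas (1991) on asynchronous fixed-point iterations to conclude. You instead re-derive that asynchronous convergence theorem from scratch in this special case: the monotonicity of $M_n = \|d_n - d^*\|_\infty$, the per-sweep contraction factor $\gamma$, and the coupon-collector argument giving infinitely many completed sweeps almost surely. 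This is essentially the classical ``box condition'' proof for asynchronous iterations of a sup-norm contraction, so the two arguments rest on the same structural insight; what your version buys is self-containedness, an explicit accounting of exactly which hypotheses are used where (determinism for exactness of updates, coverage for sweep termination, contraction for per-sweep shrinkage), and even a quantitative rate $\gamma^m$ in the number of completed sweeps, whereas the paper's citation buys brevity and access to a more general result (which also tolerates, e.g., outdated information in the updates). Two minor points to tighten: your geometric bound $(1-\epsilon)^L$ on missing a coordinate requires reading the coverage assumption as a per-step lower bound conditional on the past (the same reading the paper implicitly uses to get infinite visitation), and strictly speaking the asynchronous iterates $d_n$ need not remain pseudometrics (a partial refresh can break the triangle inequality), so the contraction step should be stated for the defining formula of $\F$ acting on arbitrary bounded symmetric functions --- a gloss the paper makes as well, and which is harmless since the sup-norm contraction inequality holds coordinatewise for that formula regardless.
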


If the environment is stochastic, the repeated application of $\hat{\F}$ does not converge to the fixed point of the operator $\F$. In fact, it is not even stable in the space of pseudometrics (the expectation and absolute value are not commutative). This results appears as a limitation of our work, since it only applies to deterministic environments. We leave to future work whether we can define a different pseudometric (fixed point of another operator) that would have convergence guarantees in the sampling case. As we only evaluate our approach on deterministic environments (Section \ref{sec:experiments}), another interesting direction is whether our approach empirically extends to stochastic environments (albeit less principled).

\subsection{Pseudometric learning with approximation}
\label{sec:approx}

Building upon the insights of the previous sections, we  derive an approximate version of the iterative scheme, to estimate a near-optimal pseudometric $d^*$. Pairs of transitions are assumed to be sampled from a fixed dataset $\dataset$. We use Siamese neural networks $\Phi$ \citep{bromley1994signature} to derive an approximate version of a pseudometric $d_\Phi$. Using Siamese networks to learn a pseudometric \citep{castro2020scalable} is natural since it respects the actual definition of a pseudometric by design (Definition \ref{def:pseudometric}). To ease notations, we conflate the definition of the deep network $\Phi$ with its parameters. We define the pseudometric $d_\Phi$ as:
\begin{align}
    d_\Phi(s_1, a_1; s_2, a_2) = \| \Phi(s_1, a_1) - \Phi(s_2, a_2) \|,
\end{align} where $\|\cdot\|$ is the Euclidean distance.

From the fixed-point iteration scheme defined in Section \ref{sec:sampling}, we want to define a loss to retrieve the fixed point $d^*$. Similarly to fitted value-iteration methods \citep{Bertsekas1996NeuroDynamicP,Munos2008FiniteTimeBF}, which is the basis for the DQN algorithn \citep{Mnih2015HumanlevelCT}, we consider the parameters of the image of the operator $\hat{\F}$ to be fixed, and note it $\hat{\F}(d_{\bar{\Phi}})$. We thus learn $d_\Phi$ by minimizing the following loss, which is exactly the temporal difference error $\big( \hat{\F}(d_{\bar{\Phi}}) - d_\Phi \big)^2$: 
\begin{align}
    \Lphi = \E_{\substack{s_1, a_1, r_1, s'_1 \sim \dataset\\
                  s_2, a_2, r_2, s'_2 \sim \dataset}}
                  \Big( 
    \begin{aligned}[t]
                  d_\Phi(&s_1, a_1; s_2, a_2) - |r_1 - r_2| - \\
    & \gamma \E_{a' \in \mathcal{U}(\actions)} d_{\bar{\Phi}}(s'_1,a';s'_2,a')\Big)^2.
    \end{aligned}
\end{align}
We introduce another pair of Siamese networks $\Psi$ (again we conflate the definition of the network with its parameters) to track the bootstrapped estimate $\E_{a' \in \mathcal{U}(\actions)} d_{\bar{\Phi}}(s_1,a',s_2,a')$, that we learn minimizing the following loss:
\begin{align}
    \Lpsi = \E_{\substack{s_1 \sim \dataset, s_2 \sim \dataset}} \Big(
 \begin{aligned}[t]
                  &\| \Psi(s_1) - \Psi(s_2) \| -  \\
                  &\E_{a' \in \mathcal{U}(\actions)} d_{\bar{\Phi}}(s_1,a'; s_2,a') \Big)^2.
 \end{aligned}
\end{align}

\begin{figure*}[t]
\centering
\includegraphics[width=0.9\linewidth]{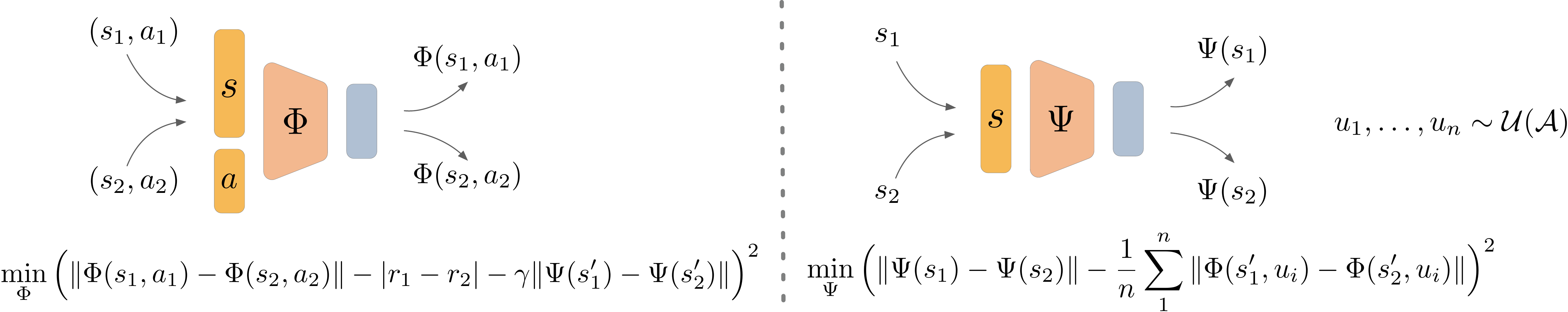}
\caption{Architecture details of pseudometric learning. Two pairs of Siamese networks $\Phi$ and $\Psi$ are concurrently optimized. Left: the pseudometric $d_\Phi$ on $\states \times \actions$. Right: The pseudometric $d_\Psi$ on $\states$ tracking the boostrapped estimate of $\F(d_\Phi)$.}
\label{fig:archi}
\end{figure*}
We justify this design choice in Section \ref{sec:tractable}, where we show that this makes the derivation of the bonus tractable. Therefore, we optimize the two following losses $\hatLpsi$ and $\hatLphi$:

\begin{align}
    \hatLphi = \E_{\substack{s_1, a_1, r_1, s'_1 \sim \dataset\\
                  s_2, a_2, r_2, s'_2 \sim \dataset}}
                  \Big(
                  \begin{aligned}[t]
                  & \|\Phi(s_1, a_1) - \Phi(s_2, a_2)\| \\
                  & - |r_1 - r_2| - \gamma |\Psi(s'_1) - \Psi(s'_2) |\Big)^2,
                  \end{aligned}
\end{align}
\begin{align}
\hatLpsi = \E_{\substack{s_1 \sim \dataset, s_2 \sim \dataset}} \Big(
 \begin{aligned}[t]
                  &\| \Psi(s_1) - \Psi(s_2) \| -  \\
                  &\frac{1}{n}\sum_{\substack{u_1, \dots, u_n \\ \sim \mathcal{U}(\actions)}} \|\Phi(s_1, u_1) - \Phi(s_2, u_2) \| \Big)^2.
 \end{aligned}
\end{align}
\vspace{-25pt}

{A visual representation of the two pairs of Siamese networks as well as their losses is provided in Figure~\ref{fig:archi}. Remark that the proposed architecture seems to present similar caveats as naive offline RL approaches (since we estimate quantities that might not be present in the dataset of collected experiences). However, here, the divergence of the quantity at hand (the pseudometric learned) is unlikely since the goal is to minimize a positive quantity. This makes the problem of learning $d_\Phi$ inherently more stable than learning an optimal policy. A limitation of this method is that it relies on the reward function $r$ to build similarities between state-action pairs, therefore in very sparse reward environments or with very limited coverage of the state-action space, the quality of the pseudometric learned might conflate state-action pairs together, and hence be less adapted to learn a meaningful measure of similarity.}

\subsection{Tractable bonus}
\label{sec:tractable} 
Once the pseudometric $d_\Phi$ is learned, we can define a lookup bonus introduced in section \ref{sec:lookup_bonus}. Given a monotonously decreasing function $f$, we have: $b(s, a) = f\big(\min_{(\hat{s}, \hat{a}) \in \dataset}d_\Phi(\hat{s}, \hat{a}; s, a) \big)$. This bonus has a complexity that is linear in the size of $\dataset$ and in the dimension of the representation $\Phi(s, a)$. As we are considering datasets with large numbers of transitions ($\sim 10^6$), this makes the exact derivation of the bonus computationally expensive.

Therefore we pre-compute the $k$-nearest neighbors of each state $s \in \dataset$ according to the Euclidean distance $d_\Psi$ induced by $\Psi$;
$d_\Psi(s_1, s_2) = \| \Psi(s_1) - \Psi(s_2) \|$. We note: 
\begin{align}
    \mathcal{H}(s) = \Big\{ (\hat{s}, \hat{a})  \in \dataset | \; \hat{s} \text{ is a $k$-nearest neighbor of $s$ for $d_\Psi$} \Big\}.
\end{align}
We infer the approximate distance bonus:
\begin{align}
    \bar{b}(s, a) = f \big(\min_{\hat{s}, \hat{a} \in \mathcal{H}(s)} d_\Phi(s, a; \hat{s}, \hat{a})\big).
\end{align}

Pre-computing the $k$-nearest neighbors is expensive (the brute force complexity is quadratic in the size of the dataset, and linear in the dimension of the representation $\Psi$). In our experiments, we use a kd-tree algorithm \citep{kd_tree} from scikit-learn \citep{sklearn}. With  multiprocessing ($\sim 50$ CPUs), pre-computing nearest neighbors did not take more than a couple hours even for the largest dataset ($2.10^6$ transitions). If the size of the dataset were to be larger, we can naturally scale our method with approximate nearest neighbor methods.

\subsection{Algorithm}
\label{sec:algorithm}
We now compile the results from this section and present the pseudocode of our method in Algorithms \ref{alg:pseudometric} and \ref{alg:act-cri}. We refer to the combination of both  as \ploff{} (\textbf{P}seudometric \textbf{L}earning \textbf{Off}line RL).

\begin{algorithm}%
    \begin{algorithmic}[1]
    \floatname{algorithm}{Procedure}
    \STATE{Initialize $\Phi$, $\Psi$ networks.} 
    \FOR {step $i = 1$ to $N$}
    \STATE Train $\Phi$: $\min_\Phi \hatLphi$
    \STATE Train $\Psi$: $\min_\Psi \hatLpsi$
    \ENDFOR
    \STATE Initialize $k$-nearest neighbors array $H$.
    \FOR {step $j = 1$ to $|\dataset|$}
    \STATE Compute k-nearest neighbors of $\Psi(s_j)$.
    \STATE Add k-nearest neighbors to the array $H$.
    \ENDFOR
    \end{algorithmic}
    \caption{Bonus learning.}
    \label{alg:pseudometric}
\end{algorithm}

\begin{algorithm}%
    \begin{algorithmic}[1]
    \floatname{algorithm}{Procedure}
    \STATE{Initialize action-value network $Q_\omega$, target network $Q_{\bar{\omega}}$, $Q_\omega$ and policy $\pi_\theta$.} 
    \FOR {step $i = 0$ to $K$}
    \STATE{Train $Q_\omega$: $\min_\omega \big(Q_\omega(s, a) - r - Q_{\bar{\omega}}(s', \pi_\theta(s')) - {\color{blue}\alpha_c \bar{b}(s', \pi(s'))}\big)^2$}
    \STATE{Train $\pi_\theta$: $\max_\theta Q_\omega(s, \pi_\theta(s)) + {\color{blue} \alpha_a \bar{b}(s,\pi(s))} $}
    \STATE{Update target network $Q_{\bar{\omega}} := Q_\omega$}
    \ENDFOR
    \end{algorithmic}
    \caption{Actor-Critic Training.}
    \label{alg:act-cri}
\end{algorithm}

\newpage
\section{Experiments \label{sec:experiments}}
In this section we conduct an experimental study for the proposed approach. We evaluate it on a series of hand manipulation tasks \citep{Rajeswaran-RSS-18}, as well as MuJoCo locomotion tasks \citep{mujoco,brockman2016openai} with multiple data collection strategies from \citet{fu2020d4rl}. We first show the details of the learning procedure of the pseudometric, before showing its performance against several baselines from \citet{fu2020d4rl}. All implementation details can be found in Appendix \ref{sec:implementation_details}.

\subsection{Evaluation environments}
We evaluate \ploff{} on four hand manipulation tasks \citep{Rajeswaran-RSS-18}: nailing a hammer, opening a door, manipulating a pen and relocating a ball. We also evaluate \ploff{} on MuJoCo locomotion tasks \citep{brockman2016openai} where the goal is to maximize the distance traveled: Walker2d, HalfCheetah and Hopper. We provide visualization of the environments in Figure \ref{fig:env}. For each environment we consider multiple datasets $\dataset$ from the D4RL benchmark \citep{fu2020d4rl}. On hand manipulation tasks, these datasets are the following, "human": transitions collected by a human operator, "cloned": transitions collected by a policy trained with behavioral cloning interacting in the environment + the initial demonstrations, "expert": transitions collected by a fine-tuned RL policy interacting in the environment. On locomotion tasks, the datasets are the following, "random": transitions collected by a random policy, "medium-replay" the first 1M transitions collected by a SAC agent \citep{pmlr-v80-haarnoja18b} trained from scratch on the environment, "medium" transitions collected by a policy with suboptimal performance, "medium-expert": transitions collected by a near optimal policy + transitions collected by a suboptimal policy. To have comparable range of rewards between environments, we scale offline rewards by scaling them in $(0, 1)$ by $r := (r - \min_\dataset r) / (\max_\dataset r - \min_\dataset r)$ and learn a policy on this scaled reward.

\begin{figure*}[t!]
\centering
\includegraphics[width=\linewidth]{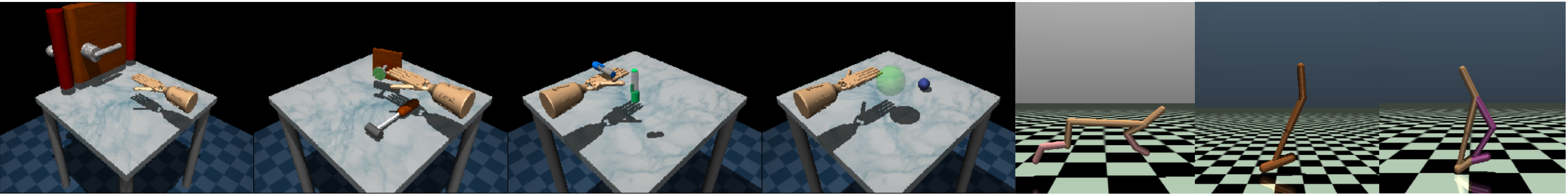}
\caption{Visualization of the environments considered. From left to right: Door, Hammer, Pen, Relocate, HalfCheetah, Hopper, Walker2d.}
\label{fig:env}
\end{figure*}

\subsection{Pseudometric learning}
We concurrently learn the deep networks $\Phi$ and $\Psi$ by minimizing the losses $\hatLphi$ and $\hatLpsi$. State-action pairs are concatenated (and states only in the case of $\Psi$) and are passed to a 2-layer network of layers sizes (1024, 32), with a relu activation on top of the first layer. Note that the concatenation step could be preceded by two disjoint layers to which the state and action are passed (thus making it more handy for visual-based obseravations).  We sample 256 actions to derive the bootstrapped estimate (loss $\hatLpsi$). We optimize $\hatLphi$ and $\hatLpsi$ using the Adam optimizer \citep{DBLP:journals/corr/KingmaB14} with batches of state-action pairs and states of size 256.

To present a qualitative intuition on the nature of the pseudometric learned, we first present the learned pseudometric on a gridworld environment with walls presented in Figure \ref{fig:ploff_gridworld}. There is a single reward state and we impose a time limit of 50 steps. We gather all transitions visited by the Q-learning algorithm trained for 500 episodes, with $\epsilon$-exploration ($\epsilon=0.1$) and a discount factor $\gamma=0.99$. States and actions are both represented using one-hot encoding. We represent the learned distances (in the sense of $d_\psi$) from the central state and from the goal state to the rest of the states. Interestingly, the distance learned takes into account the geometry of the environment, hence showing that it is task relevant. In the left part of the gridworld (far from reward), states tend to be conflated together which highlights the limitation of our work in sparse environments.

\begin{figure*}[h!]
    \centering
    \includegraphics[width=0.75\linewidth]{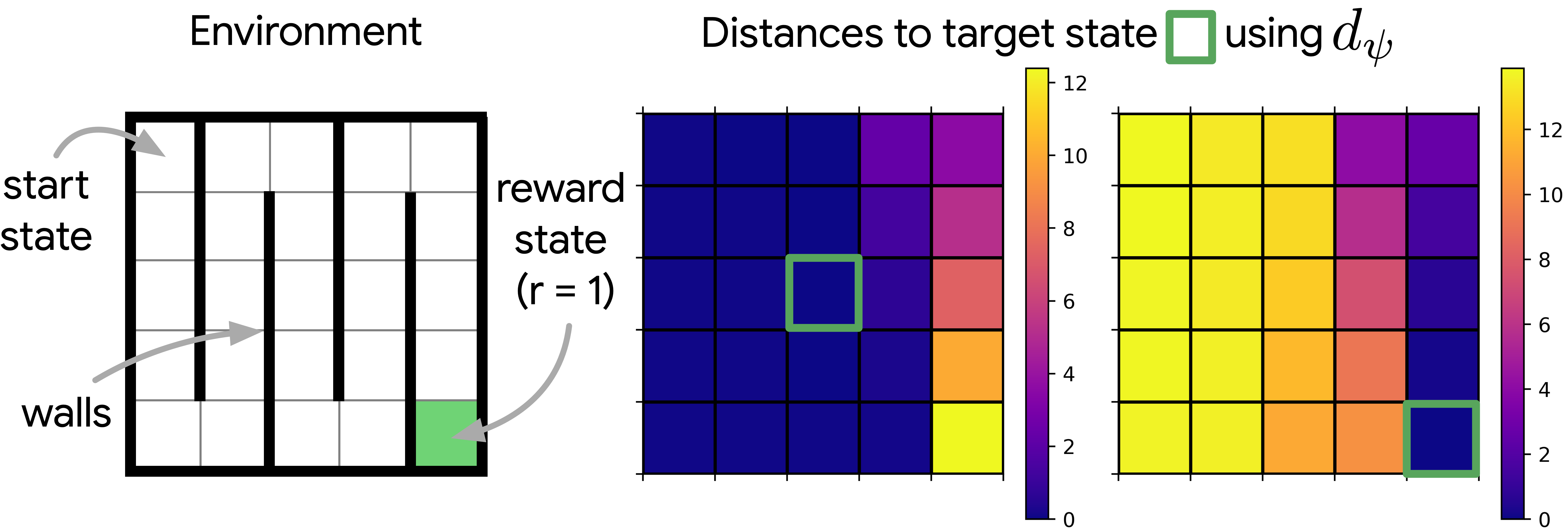}
\caption{Visualization of the learned pseudometric (center and right) in a gridworld (left). The distance from each state to the "central" state is represented in the center figure, and to the reward state in the right figure. \label{fig:ploff_gridworld}}
\end{figure*}

For the D4RL environments, we show in Figure \ref{fig:learning_curve} the decreasing learning curves for $\hatLphi$ and $\hatLpsi$. In Figure \ref{fig:noise_influence}, we show that the distribution of the learned distance $d_\Phi$ between state-action couples and perturbated versions of themselves (with Gaussian noise either on the state or the action). We show that the distance respects the intuition that the greater the perturbation is, the larger the distance becomes. Finally we provide visualizations of the state similarities learned by $\Psi$ in Appendix \ref{sec:metrics_viz}.

\begin{figure}[h!]
    \centering
    \begin{minipage}{0.48\textwidth}
        \includegraphics[width=\linewidth]{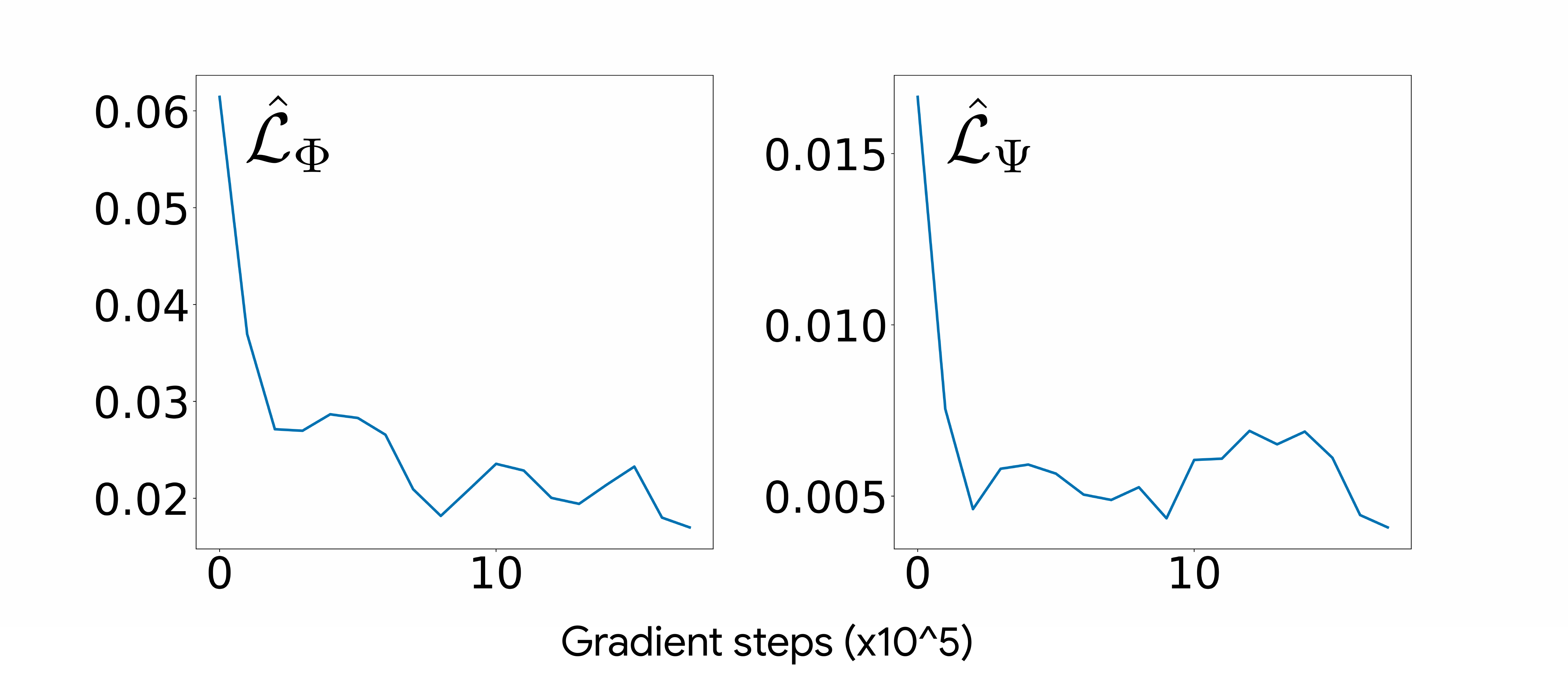}
        \caption{Learning curve of $\Phi$ and $\Psi$, for the Walker2d environment together with the "medium-replay" dataset from \citet{fu2020d4rl}. We show the values (averaged over batch) of $\hatLphi$ (left) and $\hatLpsi$ (right) throughout the learning procedure. \label{fig:learning_curve}}
    \end{minipage}\hfill
    \begin{minipage}{0.48\textwidth}
        \vspace{11pt}
        \includegraphics[width=\linewidth]{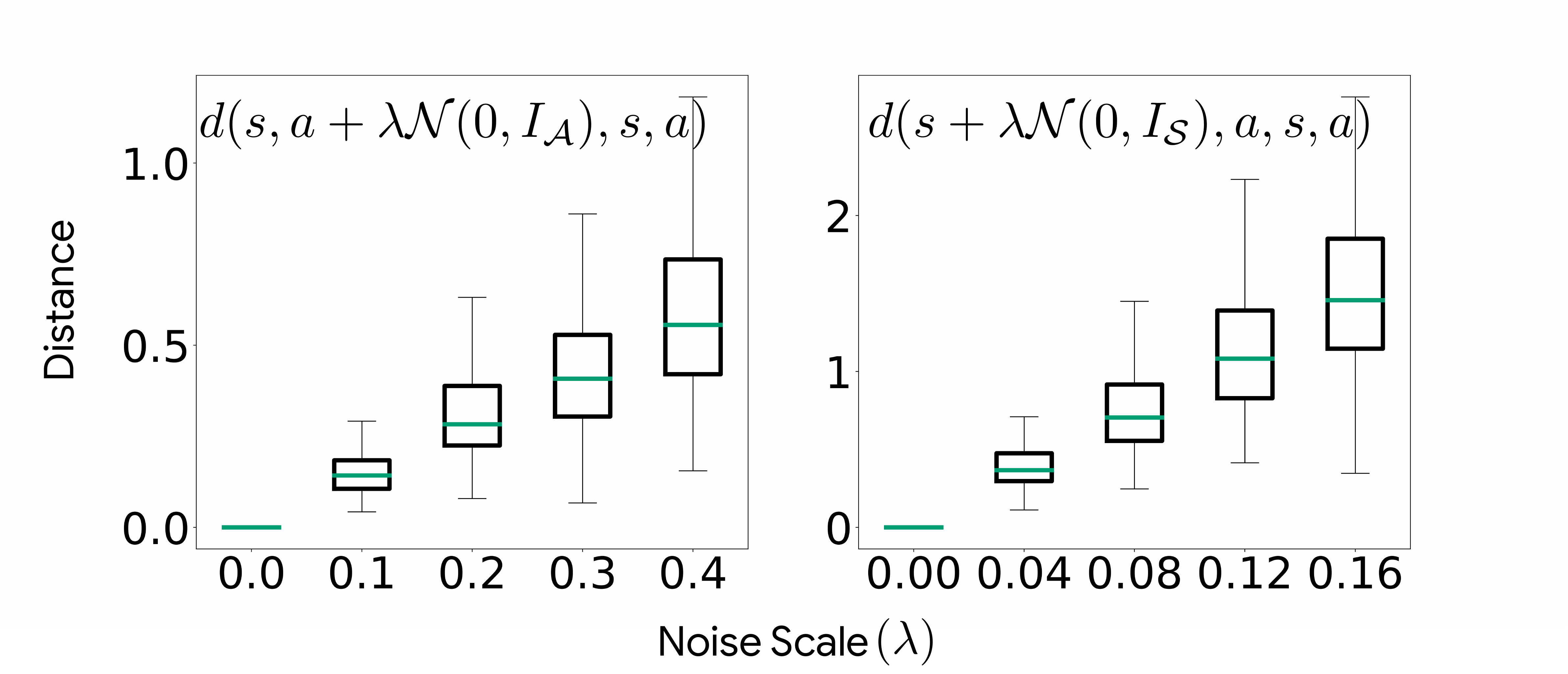}
        \caption{Influence of noise on the distance. We show on the left the learned distance of a state-action pair $(s, a)$ to a perturbated version of itself $(s, a + \lambda \mathcal{N}(0, I_\actions)$. We show on the right the learned distance of a state-action pair $(s, a)$ to a perturbated version of itself $(s+ \lambda \mathcal{N}(0, I_\states), a)$. \label{fig:noise_influence}} %
    \end{minipage}
\end{figure}

\subsection{Agent training with pseudometric bonus}
In this section, we empirically evaluate the performance of an agent trained with a bonus based on the pseudometric. In our experiments, we use TD3 \citep{fujimoto2019off}. It is an off-policy actor-critic algorithm \citep{konda} which builds on top of DDPG \citep{lillicrap2019continuous}. We use an implementation of TD3 where we load the dataset of experiences $\dataset$ in its replay buffer. We concurrently learn a policy $\pi_\theta$ and an action-value function $Q_\omega$. The critic loss and actor loss are modified to incorporate the pseudometric bonus, as described in Algorithm \ref{alg:act-cri}.

We found that the bonus defined as $\bar{b}(s, a) :=  Q_{\bar{\omega}}(s, a) \exp(-\beta d_\dataset(s, a))$ leads to strong empirical results. This bonus form is quite natural, since it uses the action-value function to scale the value of the bonus (and therefore enables hyperparameters to be more robust).

We ran a hyperparameter search on $\alpha_a, \alpha_c \in \{1, 5, 10\}$ and $\beta \in \{0.1, 0.25, 0.5\}$. We select the best hyperparameters for each family of environments (locomotion and hand manipulation), and re-run for 10 seeds. For each seed we evaluate the resulting policy on 10 episodes.
We report performance in Table \ref{tab:performance}. We compare the method with numerous baselines: AWR \citep{peng2019advantage}, Behavioral Cloning \citep{bc}, BEAR \citep{kumar2019stabilizing}, BRAC \citep{wu2019behavior}, BCQ \citep{pmlr-v80-fujimoto18a} and CQL \citep{kumar2020conservative}.

\begin{table*}[h!]
  \centering
\begin{tabularx}{\linewidth}{gssmssslll}
\toprule
Algorithm &                        BC &  BEAR & BRAC-v &   AWR &    BCQ &    CQL &  \ploff{} & \ploff-L2 & \textsc{TD3-Off} \\
\midrule
cheetah-rand        &    2.1 &  25.1 &  31.2 &   2.5 &    2.2 &   \textbf{35.4} &   1.7 $\pm$   0.1 &   2.2 $\pm$   0.0 &   28.0 $\pm$ 2.6 \\
walker-rand           &    1.6 &   \textbf{7.3} &  1.9 &   1.5 &    4.9 &    7.0 &    0.7 $\pm$   1.1 &   4.7 $\pm$   7.3 &   1.1 $\pm$ 1.2 \\
hopper-rand             &    9.8 &  11.4 &  \textbf{12.2} &  10.2 &   10.6 &   10.8 &   11.5 $\pm$   0.1 &   11.6 $\pm$   0.2 &   0.9 $\pm$   0.4 \\
cheetah-med        &   36.1 &  41.7 &   \textbf{46.3} &  37.4 &   40.7 &   44.4 &   38.2 $\pm$   0.4 &   38.0 $\pm$   0.4 &   0.3 $\pm$   4.4 \\
walker-med           &    6.6 &  59.1 &   \textbf{81.1} &  17.4 &   53.1 &   79.2 &   73.2 $\pm$   7.4 &   72.2 $\pm$   3.4 &   0.0 $\pm$   0.3 \\
hopper-med             &   29.0 &  52.1 &   31.1 &  35.9 &   54.5 &  58.0 &   \textbf{87.0 $\pm$   18.2} &   76.7 $\pm$   10.3 &   0.9 $\pm$   0.5 \\
cheetah-med-rep &   38.4 &  38.6 &  \textbf{47.7} &  40.3 &   38.2 &   46.2 &   38.5 $\pm$   1.6 &   39.0 $\pm$   1.3 &   35.9 $\pm$   3.6 \\
walker-med-rep    &   11.3 &  19.2 &  0.9 &  15.5 &   15.0 &   \textbf{26.7} &   \underline{23.2 $\pm$   9.3} &   14.6 $\pm$   6.9 &   8.0 $\pm$   4.3 \\
hopper-med-rep      &   11.8 &  33.7 &  0.6 &  28.4 &   33.1 &   \textbf{48.6} &   33.5 $\pm$   10.1 &   \underline{48.2 $\pm$   11.5} &   9.9 $\pm$   9.4 \\
cheetah-med-exp &   35.8 &  53.4 &     41.9 &  52.7 &   \textbf{64.7} &   62.4 &   \underline{58.0 $\pm$  8.2} &   52.1 $\pm$   8.8 &   3.4 $\pm$   1.9 \\
walker-med-exp    &    6.4 &  40.1 &   81.6 &  53.8 &   57.5 &  \textbf{111.0} &   98.8 $\pm$  8.2&   98.5 $\pm$   8.5 &   0.4 $\pm$   1.5 \\
hopper-med-exp      &  111.9 &  96.3 &    0.8 &  27.1 &  110.9 &   98.7 &  \textbf{112.1 $\pm$  0.3} &   107.6 $\pm$   10.0 &   5.5 $\pm$   5.4 \\
\midrule
Mean perf          &   25.0 &  39.8 & 31.4 &  26.8 &   40.4 &   \textbf{52.3} &   \underline{48.0 $\pm$   5.4} &   \underline{47.0 $\pm$   5.7} &   7.3 $\pm$   3.0 \\
\midrule
pen-human       &   34.4 &   -1.0 &  0.6 &   12.3 &   68.9 &   37.5 &   \underline{69.6 $\pm$  22.0} &   \textbf{69.9 $\pm$ 14.2} &   0.0 $\pm$   4.0 \\
hammer-human    &    1.5 &    0.3 &  0.2 &    1.2 &    0.5 &    \textbf{4.4} &    2.5 $\pm$   1.6 &   1.7 $\pm$   1.2 &   0.2 $\pm$   0.0 \\
door-human      &    0.5 &   -0.3 & -0.3 &    0.4 &   -0.0 &    \textbf{9.9} &   -0.2 $\pm$   0.2 &   0.6 $\pm$   1.6 &   -0.3 $\pm$   0.0 \\
relocate-human  &    0.0 &   -0.3 & -0.3 &   -0.0 &   -0.1 &    \textbf{0.2} &   0.0 $\pm$   0.0 &    0.0 $\pm$   0.1 &   -0.3 $\pm$   0.0 \\
pen-cloned      &   \textbf{56.9} &   26.5 & -2.5 &   28.0 &   44.0 &   39.2 &   35.4 $\pm$  7.7 &   31.7 $\pm$   10.1 &   -3.7 $\pm$   0.5 \\
hammer-cloned   &    0.8 &    0.3 &  0.3 &    0.4 &    0.4 &    \textbf{2.1} &    0.4 $\pm$   0.0 &   0.4 $\pm$   0.1 &   0.2 $\pm$   0.0 \\
door-cloned     &   -0.1 &   -0.1 & -0.1 &    0.0 &    0.0 &    \textbf{0.4} &    0.0 $\pm$   0.0 &   0.0 $\pm$   0.0 &   -0.2 $\pm$   0.1 \\
relocate-cloned &   \textbf{-0.1} &   -0.3 & -0.3 &   -0.2 &   -0.3 &   -0.1 &   -0.2 $\pm$   0.0 &   -0.2 $\pm$   0.0 &   -0.2 $\pm$   0.0 \\
pen-expert      &   85.1 &  105.9 & -3.0 &  111.0 &  \textbf{114.9} &  107.0 &  \underline{104.8 $\pm$  22.8} &   \underline{111.8 $\pm$   12.0} &   -2.8 $\pm$   1.2 \\
hammer-expert   &  \textbf{125.6} &  127.3 & 0.3 &   39.0 &  107.2 &   86.7 &  \underline{116.6 $\pm$  9.4} &   \underline{120.0 $\pm$   5.8} &   0.2 $\pm$   0.0 \\
door-expert     &   34.9 &  103.4 &  -0.3 &  102.9 &   99.0 &  101.5 &   \textbf{104.2 $\pm$  2.0} &   \underline{104.0 $\pm$   1.4} &   -0.1 $\pm$   0.1 \\
relocate-expert &  101.3 &   98.6 & -0.4 &   91.5 &   41.6 &   95.0 &  \textbf{107.0 $\pm$   2.4} &   77.3 $\pm$   1.4 &   -0.3 $\pm$   0.0 \\
\midrule
Mean perf   &   36.7 &   38.3 &  -0.4 &   32.2 &   39.6 &   40.3 &   \textbf{45.0 $\pm$   5.7} &   \underline{43.2 $\pm$   4.3} &   -0.6 $\pm$   0.5 \\
\bottomrule
\end{tabularx}
\caption{\label{tab:performance} Evaluation of \ploff{}. We report the results of the baselines using performance results reported by \citet{fu2020d4rl}, which do not incorporate standard deviation of performances, since the numbers are based on 3 seeds. In our case we use 10 seeds, following recommendations from \citet{henderson}, and evaluate on 10 episodes for each seed before reporting average and standard deviations of performance. Results are bolded if they are best on average, underlined if within a standard deviation of the best average performance.} 
\end{table*}

Table \ref{tab:performance} shows the performance of \ploff{} on the D4RL benchmark. On average, it tops other methods on hand manipulation tasks, and tops all methods but CQL on locomotion tasks. However, even if \ploff{} performs well across the board, the approach does not solve the common failure cases shared by all methods (random datasets as well as datasets with human operated transitions, see Table \ref{tab:performance}).

\subsection{Ablations \label{sec:baselines}} We perform two ablations of the proposed method and report results in Table \ref{tab:performance}. The first one consists in using TD3 without any bonus that we note \textsc{TD3-off}. The second one is similar to \ploff{} although with a bonus based on the Euclidean distance between the concatenation of the state and the action rather than a learned distance, we refer to it as \textsc{PLOff-L2}. For \textsc{PLOff-L2}, we used the same experimental evaluation protocol and hyperparameter search as the proposed method.

The results in Table \ref{tab:performance} show that without a bonus, the performance of the policy learned is mediocre. On the other hand, \textsc{PLOff-L2} reaches performance slightly below \ploff{}. This should not come as a surprise since the Euclidean distance has been shown to be an effective measure of similarity in some continuous control tasks \citep{pwil}. Note however that in more complex tasks (typically vision-based environments), the Euclidean distance is a poor measure of similarity which makes the learning of a pseudometric necessary.

\newpage
\section{Related Work}
\label{sec:related}

\textbf{Offline Reinforcement Learning.} Offline RL \citep{lagoudakis2003least,ernst2005tree,riedmiller2005neural,pietquin2011sample,lange2012batch,levine2020offline} methods suffer from overestimation of state-action pairs that are not in the support of logged transitions. A number of methods have been explored to mitigate this phenomenon, by constraining the learned policy to be close in terms of a probabilistic distance to the behavioral policy \citep{jaques2019way,wu2019behavior, kumar2019stabilizing, siegel2020keep,peng2019advantage,pmlr-v80-fujimoto18a}, or a \textit{pessimistic} \citep{buckman2020importance} estimate of either the Q-value or the bootstrapped Q-value \citep{kumar2020conservative,laroche2019safe,simao2019safe,nadjahi2019safe}. \citet{pmlr-v80-fujimoto18a,kumar2019stabilizing,wu2019behavior} estimate the behavior policy using a conditional VAE \citep{Kingma2014AutoEncodingVB}, and constrain the policy learned offline with a distance to the behavior policy using the Kullback-Leibler divergence, the Wasserstein distance and the Maximum-Mean Discrepancy respectively.

In this work we trade the problem of estimating the behavioral policy (which is specially problematic if it is multimodal) with the computational cost of a lookup; and instead of penalizing the policy with a distribution-based distance, we learn a task relevant pseudometric instead. 
As our bonus is based on learned structural properties of the MDP we can also draw a connection to model-based off-policy RL \citep{kidambi2020morel,yu2020mopo,argenson2020model}.\\

\noindent
\textbf{State-action similarity in MDPs.} Bisimulation relations are a form of state abstraction \citep{li2006towards}, introduced in the context of MDPs by \citet{givan2003equivalence}, where states with the same rewards and transitions are aggregated. As this definition is strict, a number of works define approximate versions of it. For example, \citet{dean1997model} use a bound rather than an exact equivalence. Similarly, \citet{ferns2004metrics} introduce bisimulation metrics \citep{ferns2011bisimulation,ferns2012methods}, which use the reward signal to decide the proximity between two states. This makes bisimulation metrics close to the difference between optimal values between two states \citep{ferns2014bisimulation}. Learning a bisimulation metric online \citep{castro2020scalable,comanici2012fly} has been shown to be beneficial to learn controllable representations that eliminates unnecessary details of the state \citep{zhang2020learning} or can be used as an auxiliary loss which leads to improvement performance on the Atari benchmark \citep{gelada2019deepmdp}. \citet{castro2020scalable} introduces the Siamese network architecture, as well as a loss, to derive the bisimulation metric online. Our work differs as it learns a pseudometric on state-action pairs rather than states, and is based on offline transitions.

\citet{ravindran2003relativized} introduces MDP homomorphism as another form of abstraction which is state-action dependent rather than state dependent. Again as the partitioning induced by a homomorphism is too restrictive to be useful in practice, a number of work has looked into relaxed versions \citep{ravindran2004approximate,wolfe2006decision,van2020plannable,taylor2008bounding}.  Our work is closer to bisimulation metrics since it introduces the similarity between states-actions as a difference between reward accumulated when following the same sequence of actions (except for the first one).

\section{Concluding Remarks}

We introduced a new paradigm to compute a pseudometric-based bonus for offline RL. We learn policies consistent with the behavior policy that generated the collected transitions, and hence reduce action extrapolation error.

We showed how to derive a pseudometric from logged transitions, extending existing work from \citet{castro2020scalable} from the online to the offline setting and from pseudometrics on state space to pseudometrics on state-action space. We showed that the pseudometric we desire to learn is the fixed point of an operator, and we provide a neural architecture as well as a loss to learn it.

Conceptually, our bonus introduces a larger computational cost against other approaches that reduce extrapolation errors. We argue that this is actually a desirable direction of research for offline RL. In the presence of a fixed dataset of transitions, and since we cannot add new transitions into memory, we should insist on the other side of the well-known memory-computation trade-off.%

We demonstrated in our experimental study that our method performs comparably to existing state-of-the-art methods (tops other methods on hand manipulation task, second to top on locomotion tasks). 

\section*{Acknowledgments}
We thank Johan Ferret, Adrien Ali Ta\"iga, Saurabh Kumar and Pablo Samuel Castro for their feedback on earlier versions of the manuscript.

\bibliography{main}

\begin{thebibliography}{76}
\providecommand{\natexlab}[1]{#1}
\providecommand{\url}[1]{\texttt{#1}}
\expandafter\ifx\csname urlstyle\endcsname\relax
  \providecommand{\doi}[1]{doi: #1}\else
  \providecommand{\doi}{doi: \begingroup \urlstyle{rm}\Url}\fi

\bibitem[Argenson \& Dulac-Arnold(2021)Argenson and
  Dulac-Arnold]{argenson2020model}
Argenson, A. and Dulac-Arnold, G.
\newblock Model-based offline planning.
\newblock \emph{International Conference on Learning Representations (ICLR)},
  2021.

\bibitem[Banach(1922)]{banach1922operations}
Banach, S.
\newblock Sur les op{\'e}rations dans les ensembles abstraits et leur
  application aux {\'e}quations int{\'e}grales.
\newblock \emph{Fund. math}, 1922.

\bibitem[Bellemare et~al.(2016)Bellemare, Srinivasan, Ostrovski, Schaul,
  Saxton, and Munos]{Bellemare2016UnifyingCE}
Bellemare, M.~G., Srinivasan, S., Ostrovski, G., Schaul, T., Saxton, D., and
  Munos, R.
\newblock Unifying count-based exploration and intrinsic motivation.
\newblock In \emph{Neural Information Processing Systems (NeurIPS)}, 2016.

\bibitem[Bellman(1957)]{bellman}
Bellman, R.
\newblock A markovian decision process.
\newblock \emph{Indiana University Mathematics Journal}, 1957.

\bibitem[Bertsekas \& Tsitsiklis(1996)Bertsekas and
  Tsitsiklis]{Bertsekas1996NeuroDynamicP}
Bertsekas, D. and Tsitsiklis, J.
\newblock Neuro-dynamic programming.
\newblock 1996.

\bibitem[Bertsekas \& Tsitsiklis(1991)Bertsekas and
  Tsitsiklis]{bertsekas1991some}
Bertsekas, D.~P. and Tsitsiklis, J.~N.
\newblock Some aspects of parallel and distributed iterative algorithms-a
  survey.
\newblock \emph{Automatica}, 1991.

\bibitem[Bradbury et~al.(2018)Bradbury, Frostig, Hawkins, Johnson, Leary,
  Maclaurin, Necula, Paszke, Vander{P}las, Wanderman-{M}ilne, and Zhang]{jax}
Bradbury, J., Frostig, R., Hawkins, P., Johnson, M.~J., Leary, C., Maclaurin,
  D., Necula, G., Paszke, A., Vander{P}las, J., Wanderman-{M}ilne, S., and
  Zhang, Q.
\newblock {JAX}: composable transformations of {P}ython+{N}um{P}y programs,
  2018.
\newblock URL \url{http://github.com/google/jax}.

\bibitem[Brockman et~al.(2016)Brockman, Cheung, Pettersson, Schneider,
  Schulman, Tang, and Zaremba]{brockman2016openai}
Brockman, G., Cheung, V., Pettersson, L., Schneider, J., Schulman, J., Tang,
  J., and Zaremba, W.
\newblock Openai gym.
\newblock \emph{arXiv preprint arXiv:1606.01540}, 2016.

\bibitem[Bromley et~al.(1994)Bromley, Guyon, LeCun, S{\"a}ckinger, and
  Shah]{bromley1994signature}
Bromley, J., Guyon, I., LeCun, Y., S{\"a}ckinger, E., and Shah, R.
\newblock Signature verification using a" siamese" time delay neural network.
\newblock \emph{Neural Information Processing Systems (NeurIPS)}, 1994.

\bibitem[Buckman et~al.(2021)Buckman, Gelada, and
  Bellemare]{buckman2020importance}
Buckman, J., Gelada, C., and Bellemare, M.~G.
\newblock The importance of pessimism in fixed-dataset policy optimization.
\newblock \emph{International Conference on Learning Representations (ICLR)},
  2021.

\bibitem[Castro(2020)]{castro2020scalable}
Castro, P.~S.
\newblock Scalable methods for computing state similarity in deterministic
  markov decision processes.
\newblock In \emph{AAAI Conference on Artificial Intelligence}, 2020.

\bibitem[Comanici et~al.(2012)Comanici, Panangaden, and
  Precup]{comanici2012fly}
Comanici, G., Panangaden, P., and Precup, D.
\newblock On-the-fly algorithms for bisimulation metrics.
\newblock In \emph{International Conference on Quantitative Evaluation of
  Systems}, 2012.

\bibitem[Dadashi et~al.(2021)Dadashi, Hussenot, Geist, and Pietquin]{pwil}
Dadashi, R., Hussenot, L., Geist, M., and Pietquin, O.
\newblock Primal wasserstein imitation learning.
\newblock \emph{International Conference on Learning Representations (ICLR)},
  2021.

\bibitem[Dean \& Givan(1997)Dean and Givan]{dean1997model}
Dean, T. and Givan, R.
\newblock Model minimization in markov decision processes.
\newblock In \emph{AAAI Conference on Artificial Intelligence}, 1997.

\bibitem[Dulac{-}Arnold et~al.(2019)Dulac{-}Arnold, Mankowitz, and
  Hester]{rlchallenges}
Dulac{-}Arnold, G., Mankowitz, D.~J., and Hester, T.
\newblock Challenges of real-world reinforcement learning.
\newblock \emph{CoRR}, abs/1904.12901, 2019.

\bibitem[Ernst et~al.(2005)Ernst, Geurts, and Wehenkel]{ernst2005tree}
Ernst, D., Geurts, P., and Wehenkel, L.
\newblock Tree-based batch mode reinforcement learning.
\newblock \emph{Journal of Machine Learning Research}, 2005.

\bibitem[Ferns \& Precup(2014)Ferns and Precup]{ferns2014bisimulation}
Ferns, N. and Precup, D.
\newblock Bisimulation metrics are optimal value functions.
\newblock In \emph{Uncertainty in Artificial Intelligence (UAI)}, 2014.

\bibitem[Ferns et~al.(2004)Ferns, Panangaden, and Precup]{ferns2004metrics}
Ferns, N., Panangaden, P., and Precup, D.
\newblock Metrics for finite markov decision processes.
\newblock In \emph{Uncertainty in Artificial Intelligence (UAI)}, 2004.

\bibitem[Ferns et~al.(2006)Ferns, Castro, Precup, and
  Panangaden]{ferns2012methods}
Ferns, N., Castro, P.~S., Precup, D., and Panangaden, P.
\newblock Methods for computing state similarity in markov decision processes.
\newblock \emph{Uncertainty in Artificial Intelligence (UAI)}, 2006.

\bibitem[Ferns et~al.(2011)Ferns, Panangaden, and
  Precup]{ferns2011bisimulation}
Ferns, N., Panangaden, P., and Precup, D.
\newblock Bisimulation metrics for continuous markov decision processes.
\newblock \emph{SIAM Journal on Computing}, 2011.

\bibitem[Friedman et~al.(1977)Friedman, Bentley, and Finkel]{kd_tree}
Friedman, J., Bentley, J., and Finkel, R.
\newblock An algorithm for finding best matches in logarithmic expected time.
\newblock \emph{ACM Trans. Math. Softw.}, 1977.

\bibitem[Fu et~al.(2020)Fu, Kumar, Nachum, Tucker, and Levine]{fu2020d4rl}
Fu, J., Kumar, A., Nachum, O., Tucker, G., and Levine, S.
\newblock D4rl: Datasets for deep data-driven reinforcement learning.
\newblock \emph{arXiv preprint arXiv:2004.07219}, 2020.

\bibitem[Fujimoto et~al.(2018)Fujimoto, van Hoof, and
  Meger]{pmlr-v80-fujimoto18a}
Fujimoto, S., van Hoof, H., and Meger, D.
\newblock Addressing function approximation error in actor-critic methods.
\newblock In \emph{International Conference on Machine Learning (ICML)}, 2018.

\bibitem[Fujimoto et~al.(2019)Fujimoto, Meger, and Precup]{fujimoto2019off}
Fujimoto, S., Meger, D., and Precup, D.
\newblock Off-policy deep reinforcement learning without exploration.
\newblock In \emph{International Conference on Machine Learning (ICML)}, 2019.

\bibitem[Gelada et~al.(2019)Gelada, Kumar, Buckman, Nachum, and
  Bellemare]{gelada2019deepmdp}
Gelada, C., Kumar, S., Buckman, J., Nachum, O., and Bellemare, M.~G.
\newblock Deepmdp: Learning continuous latent space models for representation
  learning.
\newblock In \emph{International Conference on Machine Learning (ICML)}, 2019.

\bibitem[Givan et~al.(2003)Givan, Dean, and Greig]{givan2003equivalence}
Givan, R., Dean, T., and Greig, M.
\newblock Equivalence notions and model minimization in markov decision
  processes.
\newblock \emph{Artificial Intelligence}, 2003.

\bibitem[Haarnoja et~al.(2018)Haarnoja, Zhou, Abbeel, and
  Levine]{pmlr-v80-haarnoja18b}
Haarnoja, T., Zhou, A., Abbeel, P., and Levine, S.
\newblock Soft actor-critic: Off-policy maximum entropy deep reinforcement
  learning with a stochastic actor.
\newblock In \emph{International Conference on Machine Learning (ICML)}, 2018.

\bibitem[Henderson et~al.(2018)Henderson, Islam, Bachman, Pineau, Precup, and
  Meger]{henderson}
Henderson, P., Islam, R., Bachman, P., Pineau, J., Precup, D., and Meger, D.
\newblock Deep reinforcement learning that matters.
\newblock \emph{AAAI Conference on Artificial Intelligence}, 2018.

\bibitem[Jaques et~al.(2019)Jaques, Ghandeharioun, Shen, Ferguson, Lapedriza,
  Jones, Gu, and Picard]{jaques2019way}
Jaques, N., Ghandeharioun, A., Shen, J.~H., Ferguson, C., Lapedriza, A., Jones,
  N., Gu, S., and Picard, R.
\newblock Way off-policy batch deep reinforcement learning of implicit human
  preferences in dialog.
\newblock \emph{arXiv preprint arXiv:1907.00456}, 2019.

\bibitem[Kidambi et~al.(2020)Kidambi, Rajeswaran, Netrapalli, and
  Joachims]{kidambi2020morel}
Kidambi, R., Rajeswaran, A., Netrapalli, P., and Joachims, T.
\newblock Morel: Model-based offline reinforcement learning.
\newblock \emph{Neural Information Processing Systems (NeurIPS)}, 2020.

\bibitem[Kim \& Park(2018)Kim and Park]{gmmil}
Kim, K.-E. and Park, H.
\newblock Imitation learning via kernel mean embedding.
\newblock In \emph{AAAI Conference on Artificial Intelligence}, 2018.

\bibitem[Kingma \& Ba(2015)Kingma and Ba]{DBLP:journals/corr/KingmaB14}
Kingma, D.~P. and Ba, J.
\newblock Adam: A method for stochastic optimization.
\newblock In \emph{International Conference on Learning Representations
  (ICLR)}, 2015.

\bibitem[Kingma \& Welling(2014)Kingma and Welling]{Kingma2014AutoEncodingVB}
Kingma, D.~P. and Welling, M.
\newblock Auto-encoding variational bayes.
\newblock \emph{CoRR}, abs/1312.6114, 2014.

\bibitem[Konda \& Tsitsiklis(2000)Konda and Tsitsiklis]{konda}
Konda, V. and Tsitsiklis, J.
\newblock Actor-critic algorithms.
\newblock In \emph{Neural Information Processing Systems (NeurIPS)}, 2000.

\bibitem[Kumar et~al.(2019)Kumar, Fu, Tucker, and Levine]{kumar2019stabilizing}
Kumar, A., Fu, J., Tucker, G., and Levine, S.
\newblock Stabilizing off-policy q-learning via bootstrapping error reduction.
\newblock \emph{Neural Information Processing Systems (NeurIPS)}, 2019.

\bibitem[Kumar et~al.(2020)Kumar, Zhou, Tucker, and
  Levine]{kumar2020conservative}
Kumar, A., Zhou, A., Tucker, G., and Levine, S.
\newblock Conservative q-learning for offline reinforcement learning.
\newblock \emph{Neural Information Processing Systems (NeurIPS)}, 2020.

\bibitem[Lagoudakis \& Parr(2003)Lagoudakis and Parr]{lagoudakis2003least}
Lagoudakis, M.~G. and Parr, R.
\newblock Least-squares policy iteration.
\newblock \emph{Journal of Machine Learning Research}, 2003.

\bibitem[Lange et~al.()Lange, Gabel, and Riedmiller]{lange2012batch}
Lange, S., Gabel, T., and Riedmiller, M.
\newblock Batch reinforcement learning.
\newblock In \emph{Reinforcement learning}.

\bibitem[Laroche et~al.(2019)Laroche, Trichelair, and
  Des~Combes]{laroche2019safe}
Laroche, R., Trichelair, P., and Des~Combes, R.~T.
\newblock Safe policy improvement with baseline bootstrapping.
\newblock In \emph{International Conference on Machine Learning (ICML)}, 2019.

\bibitem[{Le Lan} et~al.(2021){Le Lan}, Bellemare, and
  Castro]{Lan2021MetricsAC}
{Le Lan}, C., Bellemare, M.~G., and Castro, P.~S.
\newblock Metrics and continuity in reinforcement learning.
\newblock \emph{AAAI Conference on Artificial Intelligence}, 2021.

\bibitem[Levine et~al.(2020)Levine, Kumar, Tucker, and Fu]{levine2020offline}
Levine, S., Kumar, A., Tucker, G., and Fu, J.
\newblock Offline reinforcement learning: Tutorial, review, and perspectives on
  open problems.
\newblock \emph{arXiv preprint arXiv:2005.01643}, 2020.

\bibitem[Li et~al.(2006)Li, Walsh, and Littman]{li2006towards}
Li, L., Walsh, T.~J., and Littman, M.~L.
\newblock Towards a unified theory of state abstraction for mdps.
\newblock \emph{International Symposium on Artificial Intelligence and
  Mathematics (ISAIM)}, 2006.

\bibitem[Lillicrap et~al.(2016)Lillicrap, Hunt, Pritzel, Heess, Erez, Tassa,
  Silver, and Wierstra]{lillicrap2019continuous}
Lillicrap, T.~P., Hunt, J.~J., Pritzel, A., Heess, N., Erez, T., Tassa, Y.,
  Silver, D., and Wierstra, D.
\newblock Continuous control with deep reinforcement learning.
\newblock In \emph{International Conference on Learning Representations
  (ICLR)}, 2016.

\bibitem[Lin(1992)]{experiencereplay}
Lin, L.~J.
\newblock Self-improving reactive agents based on reinforcement learning,
  planning and teaching.
\newblock \emph{Machine Learning}, 1992.

\bibitem[Mahadevan \& Maggioni(2007)Mahadevan and
  Maggioni]{protovaluefunctions}
Mahadevan, S. and Maggioni, M.
\newblock Proto-value functions: A laplacian framework for learning
  representation and control in markov decision processes.
\newblock \emph{Journal of Machine Learning Research}, 2007.

\bibitem[Melo \& Lopes(2010)Melo and Lopes]{melo}
Melo, F.~S. and Lopes, M.
\newblock Learning from demonstration using mdp induced metrics.
\newblock In \emph{European Conference on Machine Learning (ECML)}, 2010.

\bibitem[Mnih et~al.(2015)Mnih, Kavukcuoglu, Silver, Rusu, Veness, Bellemare,
  Graves, Riedmiller, Fidjeland, Ostrovski, Petersen, Beattie, Sadik,
  Antonoglou, King, Kumaran, Wierstra, Legg, and
  Hassabis]{Mnih2015HumanlevelCT}
Mnih, V., Kavukcuoglu, K., Silver, D., Rusu, A.~A., Veness, J., Bellemare,
  M.~G., Graves, A., Riedmiller, M.~A., Fidjeland, A.~K., Ostrovski, G.,
  Petersen, S., Beattie, C., Sadik, A., Antonoglou, I., King, H., Kumaran, D.,
  Wierstra, D., Legg, S., and Hassabis, D.
\newblock Human-level control through deep reinforcement learning.
\newblock \emph{Nature}, 2015.

\bibitem[Munos \& Szepesvari(2008)Munos and Szepesvari]{Munos2008FiniteTimeBF}
Munos, R. and Szepesvari, C.
\newblock Finite-time bounds for fitted value iteration.
\newblock \emph{Journal of Machine Learning Research}, 2008.

\bibitem[Nadjahi et~al.(2019)Nadjahi, Laroche, and des Combes]{nadjahi2019safe}
Nadjahi, K., Laroche, R., and des Combes, R.~T.
\newblock Safe policy improvement with soft baseline bootstrapping.
\newblock In \emph{Joint European Conference on Machine Learning and Knowledge
  Discovery in Databases}, 2019.

\bibitem[Pedregosa et~al.(2011)Pedregosa, Varoquaux, Gramfort, Michel, Thirion,
  Grisel, Blondel, Louppe, Prettenhofer, Weiss, Weiss, VanderPlas, Passos,
  Cournapeau, Brucher, Perrot, and Duchesnay]{sklearn}
Pedregosa, F., Varoquaux, G., Gramfort, A., Michel, V., Thirion, B., Grisel,
  O., Blondel, M., Louppe, G., Prettenhofer, P., Weiss, R., Weiss, R.~J.,
  VanderPlas, J., Passos, A., Cournapeau, D., Brucher, M., Perrot, M., and
  Duchesnay, E.
\newblock Scikit-learn: Machine learning in python.
\newblock \emph{Journal of Machine Learning Research}, 2011.

\bibitem[Peng et~al.(2019)Peng, Kumar, Zhang, and Levine]{peng2019advantage}
Peng, X.~B., Kumar, A., Zhang, G., and Levine, S.
\newblock Advantage-weighted regression: Simple and scalable off-policy
  reinforcement learning.
\newblock \emph{arXiv preprint arXiv:1910.00177}, 2019.

\bibitem[Pietquin et~al.(2011)Pietquin, Geist, Chandramohan, and
  Frezza-Buet]{pietquin2011sample}
Pietquin, O., Geist, M., Chandramohan, S., and Frezza-Buet, H.
\newblock Sample-efficient batch reinforcement learning for dialogue management
  optimization.
\newblock \emph{ACM Transactions on Speech and Language Processing (TSLP)},
  2011.

\bibitem[Pomerleau(1991)]{bc}
Pomerleau, D.~A.
\newblock Efficient training of artificial neural networks for autonomous
  navigation.
\newblock \emph{Neural computation}, 1991.

\bibitem[Rajeswaran et~al.(2018)Rajeswaran, Kumar, Gupta, Vezzani, Schulman,
  Todorov, and Levine]{Rajeswaran-RSS-18}
Rajeswaran, A., Kumar, V., Gupta, A., Vezzani, G., Schulman, J., Todorov, E.,
  and Levine, S.
\newblock {Learning Complex Dexterous Manipulation with Deep Reinforcement
  Learning and Demonstrations}.
\newblock In \emph{Robotics: Science and Systems (RSS)}, 2018.

\bibitem[Ravindran \& Barto(2003)Ravindran and Barto]{ravindran2003relativized}
Ravindran, B. and Barto, A.~G.
\newblock Relativized options: Choosing the right transformation.
\newblock In \emph{International Conference on Machine Learning (ICML)}, 2003.

\bibitem[Ravindran \& Barto(2004)Ravindran and Barto]{ravindran2004approximate}
Ravindran, B. and Barto, A.~G.
\newblock Approximate homomorphisms: A framework for non-exact minimization in
  markov decision processes.
\newblock 2004.

\bibitem[Riedmiller(2005)]{riedmiller2005neural}
Riedmiller, M.
\newblock Neural fitted q iteration--first experiences with a data efficient
  neural reinforcement learning method.
\newblock In \emph{European Conference on Machine Learning}, 2005.

\bibitem[Schaul et~al.(2015)Schaul, Quan, Antonoglou, and
  Silver]{schaul2015prioritized}
Schaul, T., Quan, J., Antonoglou, I., and Silver, D.
\newblock Prioritized experience replay.
\newblock \emph{International Conference on Learning Representations (ICLR)},
  2015.

\bibitem[Schmidhuber(1991)]{jurgen}
Schmidhuber, J.
\newblock A possibility for implementing curiosity and boredom in
  model-building neural controllers.
\newblock 1991.

\bibitem[Schulman et~al.(2015)Schulman, Levine, Abbeel, Jordan, and
  Moritz]{Schulman2015TrustRP}
Schulman, J., Levine, S., Abbeel, P., Jordan, M., and Moritz, P.
\newblock Trust region policy optimization.
\newblock In \emph{International Conference on Machine Learning (ICML)}, 2015.

\bibitem[Siegel et~al.(2020)Siegel, Springenberg, Berkenkamp, Abdolmaleki,
  Neunert, Lampe, Hafner, and Riedmiller]{siegel2020keep}
Siegel, N.~Y., Springenberg, J.~T., Berkenkamp, F., Abdolmaleki, A., Neunert,
  M., Lampe, T., Hafner, R., and Riedmiller, M.
\newblock Keep doing what worked: Behavioral modelling priors for offline
  reinforcement learning.
\newblock \emph{Internation Conference on Learning Representations}, 2020.

\bibitem[Silver et~al.(2014)Silver, Lever, Heess, Degris, Wierstra, and
  Riedmiller]{silver2014deterministic}
Silver, D., Lever, G., Heess, N., Degris, T., Wierstra, D., and Riedmiller, M.
\newblock Deterministic policy gradient algorithms.
\newblock In \emph{International Conference on Machine Learning (ICML)}, 2014.

\bibitem[Silver et~al.(2016)Silver, Huang, Maddison, Guez, Sifre, van~den
  Driessche, Schrittwieser, Antonoglou, Panneershelvam, Lanctot, Dieleman,
  Grewe, Nham, Kalchbrenner, Sutskever, Lillicrap, Leach, Kavukcuoglu, Graepel,
  and Hassabis]{go}
Silver, D., Huang, A., Maddison, C.~J., Guez, A., Sifre, L., van~den Driessche,
  G., Schrittwieser, J., Antonoglou, I., Panneershelvam, V., Lanctot, M.,
  Dieleman, S., Grewe, D., Nham, J., Kalchbrenner, N., Sutskever, I.,
  Lillicrap, T., Leach, M., Kavukcuoglu, K., Graepel, T., and Hassabis, D.
\newblock Mastering the game of go with deep neural networks and tree search.
\newblock \emph{Nature}, 2016.

\bibitem[Sim{\~a}o et~al.(2020)Sim{\~a}o, Laroche, and Combes]{simao2019safe}
Sim{\~a}o, T.~D., Laroche, R., and Combes, R. T.~d.
\newblock Safe policy improvement with an estimated baseline policy.
\newblock \emph{International Conference on Autonomous Agents and Multi-Agent
  Systems (AAMAS)}, 2020.

\bibitem[Sutton \& Barto(1998)Sutton and Barto]{suttonbarto}
Sutton, R. and Barto, A.
\newblock Introduction to reinforcement learning.
\newblock 1998.

\bibitem[Sutton et~al.(1999)Sutton, McAllester, Singh, and
  Mansour]{Sutton1999PolicyGM}
Sutton, R., McAllester, D.~A., Singh, S., and Mansour, Y.
\newblock Policy gradient methods for reinforcement learning with function
  approximation.
\newblock In \emph{Neural Information Processing Systems (NeurIPS)}, 1999.

\bibitem[Taylor et~al.(2008)Taylor, Precup, and Panagaden]{taylor2008bounding}
Taylor, J., Precup, D., and Panagaden, P.
\newblock Bounding performance loss in approximate mdp homomorphisms.
\newblock \emph{Neural Information Processing Systems (NeurIPS)}, 2008.

\bibitem[Thrun(1992)]{thrun}
Thrun, S.
\newblock Efficient exploration in reinforcement learning.
\newblock 1992.

\bibitem[Todorov et~al.(2012)Todorov, Erez, and Tassa]{mujoco}
Todorov, E., Erez, T., and Tassa, Y.
\newblock Mujoco: A physics engine for model-based control.
\newblock \emph{International Conference on Intelligent Robots and Systems},
  2012.

\bibitem[van~der Pol et~al.(2020)van~der Pol, Kipf, Oliehoek, and
  Welling]{van2020plannable}
van~der Pol, E., Kipf, T., Oliehoek, F.~A., and Welling, M.
\newblock Plannable approximations to mdp homomorphisms: Equivariance under
  actions.
\newblock \emph{International Conference on Autonomous Agents and Multi-Agent
  Systems (AAMAS)}, 2020.

\bibitem[Villani(2008)]{villani}
Villani, C.
\newblock Optimal transport: Old and new.
\newblock 2008.

\bibitem[Vinyals et~al.(2019)Vinyals, Babuschkin, Czarnecki, Mathieu, Dudzik,
  Chung, Choi, Powell, Ewalds, Georgiev, Oh, Horgan, Kroiss, Danihelka, Huang,
  Sifre, Cai, Agapiou, Jaderberg, Vezhnevets, Leblond, Pohlen, Dalibard,
  Budden, Sulsky, Molloy, Paine, G{\"{u}}l{\c{c}}ehre, Wang, Pfaff, Wu, Ring,
  Yogatama, W{\"{u}}nsch, McKinney, Smith, Schaul, Lillicrap, Kavukcuoglu,
  Hassabis, Apps, and Silver]{starcraft}
Vinyals, O., Babuschkin, I., Czarnecki, W.~M., Mathieu, M., Dudzik, A., Chung,
  J., Choi, D.~H., Powell, R., Ewalds, T., Georgiev, P., Oh, J., Horgan, D.,
  Kroiss, M., Danihelka, I., Huang, A., Sifre, L., Cai, T., Agapiou, J.~P.,
  Jaderberg, M., Vezhnevets, A.~S., Leblond, R., Pohlen, T., Dalibard, V.,
  Budden, D., Sulsky, Y., Molloy, J., Paine, T.~L., G{\"{u}}l{\c{c}}ehre,
  {\c{C}}., Wang, Z., Pfaff, T., Wu, Y., Ring, R., Yogatama, D., W{\"{u}}nsch,
  D., McKinney, K., Smith, O., Schaul, T., Lillicrap, T.~P., Kavukcuoglu, K.,
  Hassabis, D., Apps, C., and Silver, D.
\newblock Grandmaster level in starcraft {II} using multi-agent reinforcement
  learning.
\newblock \emph{Nature}, 2019.

\bibitem[Wolfe \& Barto(2006)Wolfe and Barto]{wolfe2006decision}
Wolfe, A.~P. and Barto, A.~G.
\newblock Decision tree methods for finding reusable mdp homomorphisms.
\newblock In \emph{The National Conference on Artificial Intelligence}, 2006.

\bibitem[Wu et~al.(2019)Wu, Tucker, and Nachum]{wu2019behavior}
Wu, Y., Tucker, G., and Nachum, O.
\newblock Behavior regularized offline reinforcement learning.
\newblock \emph{arXiv preprint arXiv:1911.11361}, 2019.

\bibitem[Yu et~al.(2020)Yu, Thomas, Yu, Ermon, Zou, Levine, Finn, and
  Ma]{yu2020mopo}
Yu, T., Thomas, G., Yu, L., Ermon, S., Zou, J., Levine, S., Finn, C., and Ma,
  T.
\newblock Mopo: Model-based offline policy optimization.
\newblock \emph{Neural Information Processing Systems (NeurIPS)}, 2020.

\bibitem[Zhang et~al.(2021)Zhang, McAllister, Calandra, Gal, and
  Levine]{zhang2020learning}
Zhang, A., McAllister, R., Calandra, R., Gal, Y., and Levine, S.
\newblock Learning invariant representations for reinforcement learning without
  reconstruction.
\newblock \emph{International Conference on Learning Representations (ICLR)},
  2021.

\end{thebibliography}
\bibliographystyle{icml2021}

\appendix
\onecolumn
{\Large \bf Appendix}

\section{Proofs \label{sec:proofs}}
\stability*
\begin{proof}
Let $d$ be a pseudometric in $\mathbb{M}$, we show that $\F(d)$ respects all properties in Definition \ref{def:pseudometric} and therefore is a pseudometric. Let $(s_1, a_1), (s_2, a_2), (s_3, a_3) \in \states \times \actions$ and their associated rewards $r_1, r_2, r_3$ and next states $s'_1, s'_2, s'_3$:
\begin{itemize}
    \item the pseudo-distance of a couple to itself is null:
\begin{flalign*}
 \F(d)(s_1, a_1; s_1, a_1) &= \underbrace{|r_1 - r_1|}_{= 0} + \gamma \E_{u \in \mathcal{U}(\actions)} \underbrace{d(s'_1, u; s'_1, u)}_{= 0 \; \text{since $d$ is a pseudometric}} = 0; &
\end{flalign*}
\item symmetry:
\begin{flalign*}
 \F(d)(s_1, a_1; s_2, a_2) &= \underbrace{|r_1 - r_2|}_{= |r_2 - r_1|} + \gamma \E_{u \in \mathcal{U}(\actions)} \underbrace{d(s'_1, u; s'_2, u)}_{= d(s'_2, u; s'_1, u) \; \text{since $d$ is a pseudometric}} = \F(d)(s_2, a_2; s_1, a_1);&
\end{flalign*}
\item triangular inequality:
\begin{flalign*}
\F(d)(s_1, a_1; s_3, a_3) &= |r_1 - r_3| + \gamma \E_{u \in \mathcal{U}(\actions)} d(s'_1, u; s'_3, u) &\\
&\leq |r_1 - r_2| + |r_2 - r_3| + \gamma \E_{u \in \mathcal{U}(\actions)} d(s'_1, u; s'_2, u) + d(s'_2, u; s'_3, u) & \\
&\leq \F(d)(s_1, a_1; s_2, a_2) + \F(d)(s_2, a_2; s_3, a_3). &
\end{flalign*}
\end{itemize}

\end{proof}

\contraction*
\begin{proof}
Let $d_1, d_2 \in \mathbb{M}$, let $(s_1, a_1), (s_2, a_2) \in \states \times \actions$ and their associated rewards $r_1, r_2$ and next states $s'_1, s'_2$, we have:

\begin{align*}
    \F(d_1)(s_1, a_1; s_2, a_2) &- \F(d_2)(s_1, a_1; s_2, a_2) \\
    &= |r_1 - r_2| - |r_1 - r_2| + \gamma \E_{u \in \mathcal{U}(\actions)} d_1(s'_1, u; s'_2, u) - \gamma \E_{u \in \mathcal{U}(\actions)} d_2(s'_1, u; s'_2, u) \\
    &= \gamma \E_{u \in \mathcal{U}(\actions)} d_1(s'_1, u; s'_2, u) - d_2(s'_1, u; s'_2, u).
\end{align*}

Therefore, we have:

\begin{flalign*}
    | \F(d_1)(s_1, a_1; s_2, a_2) - \F(d_2)(s_1, a_1; s_2, a_2) | &\leq  \gamma \E_{u \in \mathcal{U}(\actions)}  | d_1(s'_1, u; s'_2, u) - d_2(s'_1, u; s'_2, u) | & \\
    &\leq  \gamma \max_{u \in \actions}  | d_1(s'_1, u; s'_2, u) - d_2(s'_1, u; s'_2, u) | &\\
    &\leq  \gamma \max_{s, s' \in \states}\max_{u, u' \in \actions}  | d_1(s, u; s', u') - d_2(s, u; s', u') | \\
    &\leq  \gamma \|d_1 - d_2 \|_{\infty}. &
\end{flalign*}

We thus have that $\| \F(d_1) - \F(d_2) \|_{\infty} \leq  \gamma \|d_1 - d_2 \|_{\infty}$, therefore $\F$ is a $\gamma$-contraction for $\| \cdot \|_\infty$.
\end{proof}

\fixedpoint*
\begin{proof}
This is a direct application of the Banach theorem \citep{banach1922operations}. $\F$ is a $\gamma$-contracting operator with $\gamma \in [0, 1)$, in the metric space $((\states \times \actions) \times (\states \times \actions), \| \cdot \|_{\infty})$, therefore using the Banach theorem we have that $\F$ has a unique fixed point $d^*$ and $\forall d_0 \in \mathbb{M}, \lim_{n \to \infty} \F^n(d_0) = d^*$.
\end{proof}

\sampleconvergence*
\begin{proof}

The repeated application of $\hat{\F}$ is an asynchronous  fixed point iteration scheme. The convergence to $d^*$ (almost surely) is a direct application of Proposition 3 from \citet{bertsekas1991some}. Note that the state-action coverage assumption enables to apply this result since all pairs of state-action are visited an infinite number of times (almost surely).

\end{proof}

\section{Implementation \label{sec:implementation_details}}
In this section, we provide a detailed description of the experimental study. 

\paragraph{Offline datasets preprocessing.} We use datasets from \citet{fu2020d4rl}. We scale the rewards by shifting them by $ - \min_{r \sim \dataset} r$ and dividing them by $\max_{r \sim \dataset} r - \min_{r \sim \dataset} r$ for both pseudometric learning and policy learning. This enables to have comparable range of rewards between environments.

\paragraph{Pseudometric learning.} The Siamese networks $\Phi$ et $\Psi$ have the same architecture: a two-layer MLP of size $(1024, 32)$ with relu activation on top of the first layer. Note that $\Phi$ takes the concatenation of state and action as input, whereas $\Psi$ only takes the state as an input. We use a discount factor $\gamma = 0.9$ (which is different than the discount factor from the agent) in the experiments (we noticed some instabilities on human datasets, which contains less transitions than the rest of the datasets, if the discount factor is larger).

We minimize the losses $\hatLphi$ and $\hatLpsi$ using the Adam optimizer with a learning rate $10^{-3}$. The batch size used to compute the losses is $256$. The bootstrapped estimate in $\hatLphi$ is estimated with $256$ actions sampled uniformly. We train the two networks by iteratively taking a gradient step on each loss for $2.10^6$ gradient steps with parameters updated using exponential parameters averaging with a rate $\tau=0.005$

Once the $\Psi$ network is trained, we derive the $k$-nearest neighbors of each state in $\dataset$ for the distance induced by $\Psi$, with $k=50$. The nearest neighbors are computed using the scikit-learn implementation of the kd-tree algorithm, taking advantage of multiprocessing (with 50 CPUs).

\paragraph{Agent training.} We re-implemented the TD3 agent from \citet{fujimoto2019off} in \textsc{jax} \citep{jax}. We use the default hyperparameters (and did not perform HP search).

For the critic, we used a three-layer network with size $(256, 256, 1)$ with tanh activation on top of the first layer and elu activation on top of the second layer. For the policy, we used a three-layer network with size $(256, 256, |\actions|)$ where $|\actions|$ is the dimension of the action space, with tanh activation on top of the first layer, elu activation on top of the second layer and tanh activation on top of the last layer. We used the Adam optimizer for both the actor and the critic and used a learning rate of $3.10^{-4}$ (consistently with \citet{fujimoto2019off}) and trained them using batch of transitions of size 256 sampled uniformly in $\dataset$, for 500000 gradient steps.

We led experiments with the following bonuses $\bar{b}$ (and focused on the first one as it led to better empirical performance):
\begin{itemize}
\item $\bar{b}(s, a) = Q_{\bar{\omega}}(s, a) \exp(-\beta d_\dataset(s, a))$ \item $\bar{b}(s, a) = \exp(-\beta d_\dataset(s, a)) $
\item $\bar{b}(s, a) = 1 - \exp(\beta d_\dataset(s, a))$
\end{itemize}

We led a hyperparameter search for both the locomotion environments and the hand manipulation environments on $\alpha_a, \alpha_c, \beta$. We selected $\alpha_a, \alpha_c$ in $\{1, 5, 10\}$, $\beta \in \{0.1,0.25,0.5\}$ as the better combination on the average normalized performance on the tasks (averaged over 3 seeds). We re-ran the best combination of hyperparameters for 10 seeds and report results averaged over the 10 seeds and 10 evaluation episodes per seed. We found that the best combination for hand manipulation tasks was $\alpha_a = 10, \alpha_c = 10, \beta = 0.5$ and for locomotion tasks was $\alpha_a = 5, \alpha_c = 1, \beta = 0.5$.

\section{Metric Visualization \label{sec:metrics_viz}}
In this section, we show the state similarity learned by $\ploff{}$ ($\Psi$ network) and visualize it for MuJoCo locomotion environments (we could not provide such visualizations on Adroit tasks since states cannot be retrieved from observations, which is the necessary condition to generate rendering). 

\begin{figure}[h!]
\centering
\includegraphics[width=\linewidth]{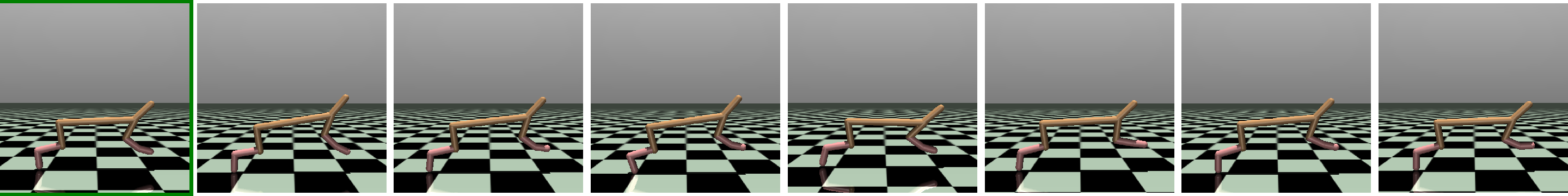}
\includegraphics[width=\linewidth]{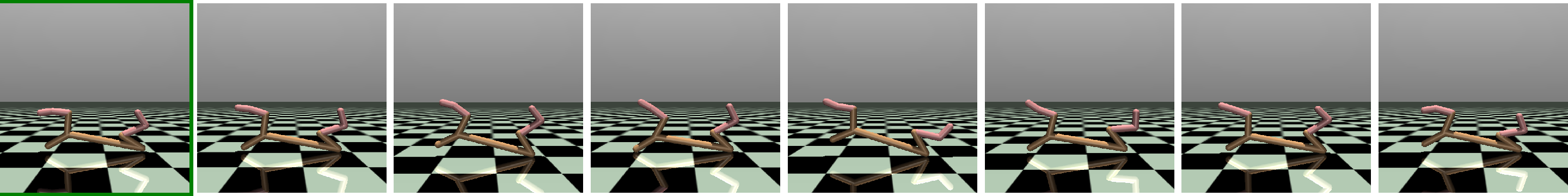}
\includegraphics[width=\linewidth]{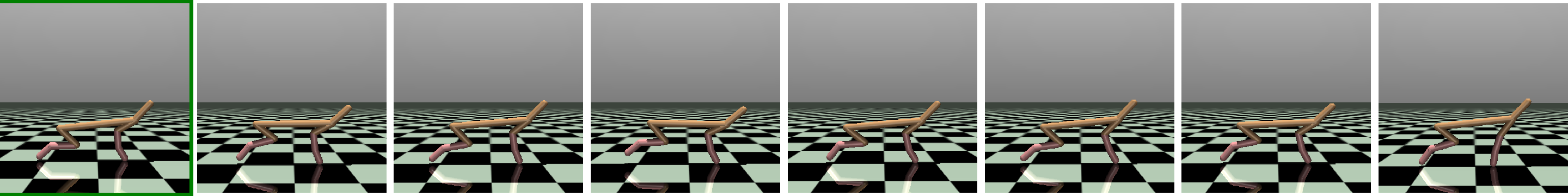}
\includegraphics[width=\linewidth]{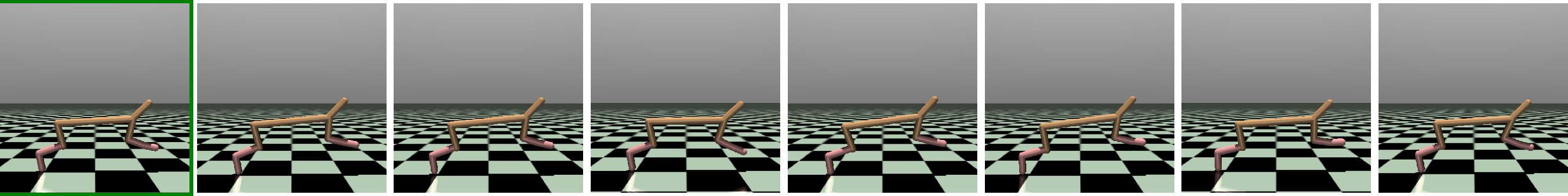}
\includegraphics[width=\linewidth]{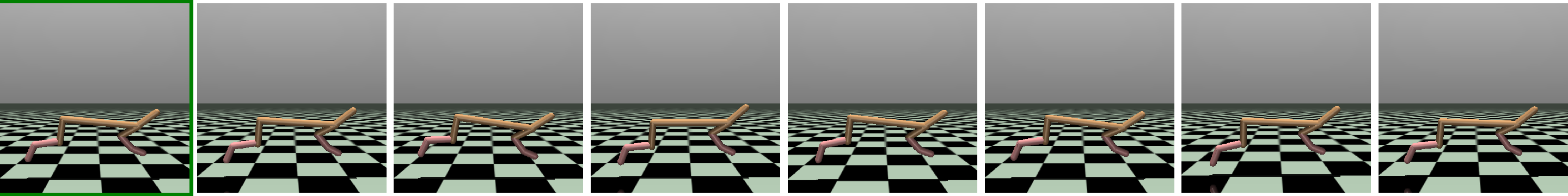}
\includegraphics[width=\linewidth]{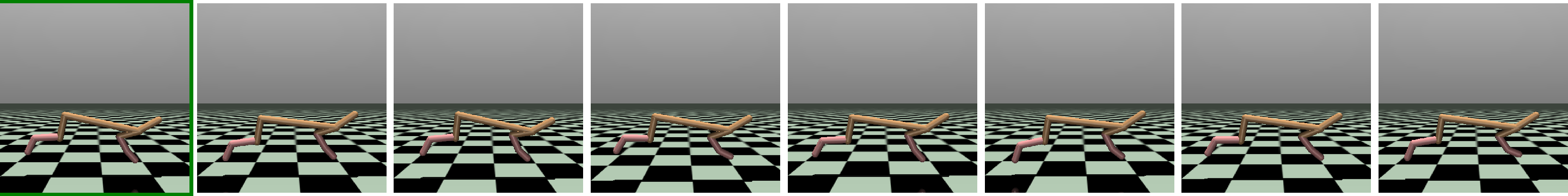}
\includegraphics[width=\linewidth]{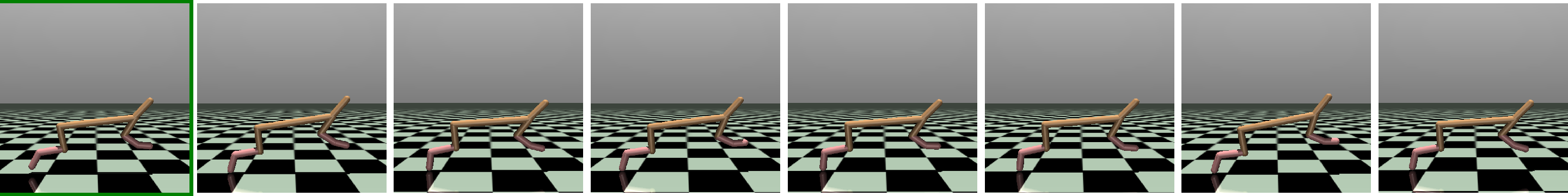}
\caption{State similarity learned by \ploff{} for HalfCheetah on the medium-replay dataset. For each row, the leftmost image is the state for which we compute nearest neighbors in the dataset $\dataset$ for the metric induced by $\Psi$ (ranked by decreasing level of similarity).}
\end{figure}

\begin{figure}
\centering
\includegraphics[width=\linewidth]{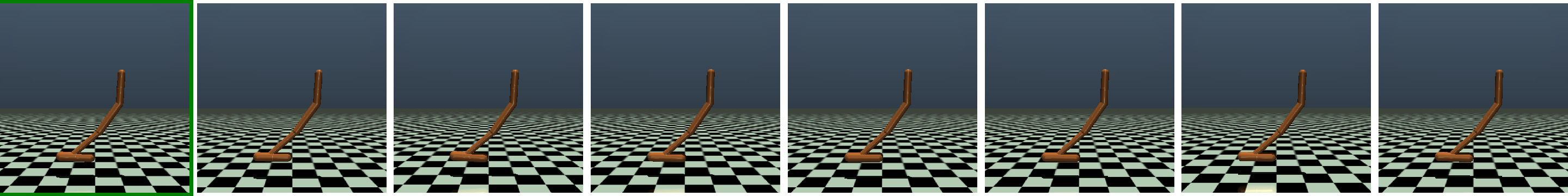}
\includegraphics[width=\linewidth]{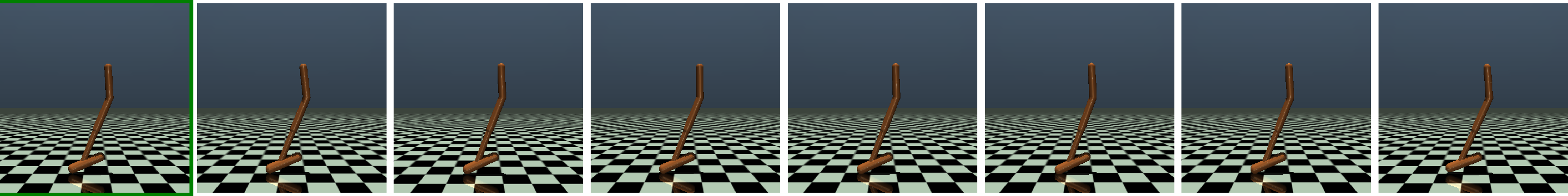}
\includegraphics[width=\linewidth]{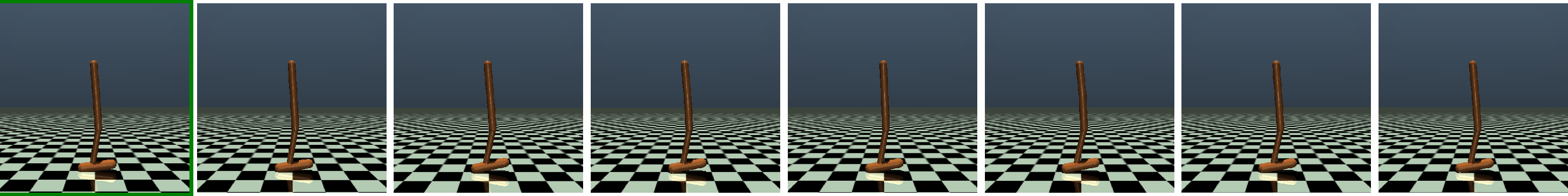}
\includegraphics[width=\linewidth]{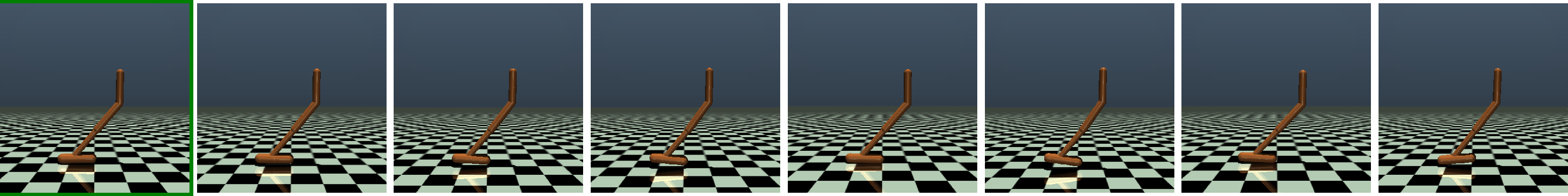}
\includegraphics[width=\linewidth]{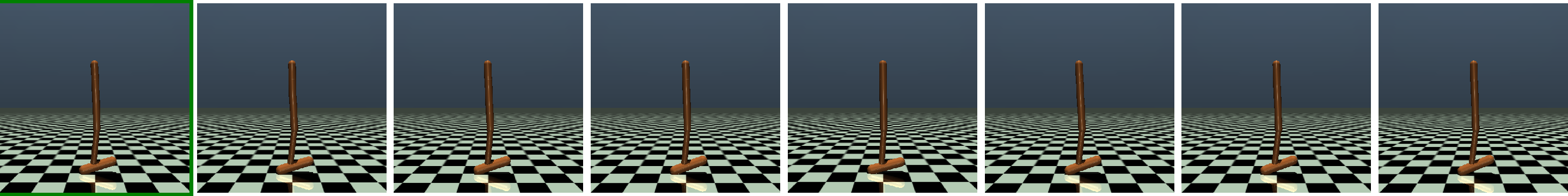}
\includegraphics[width=\linewidth]{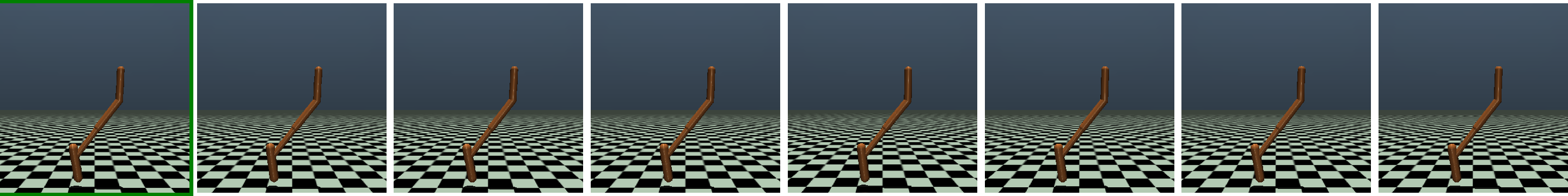}
\includegraphics[width=\linewidth]{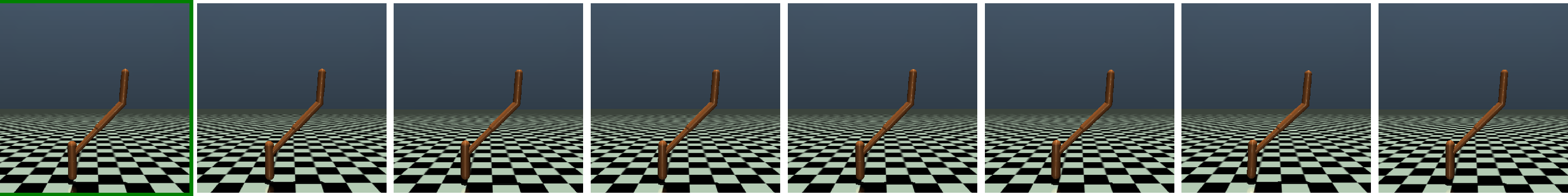}
\includegraphics[width=\linewidth]{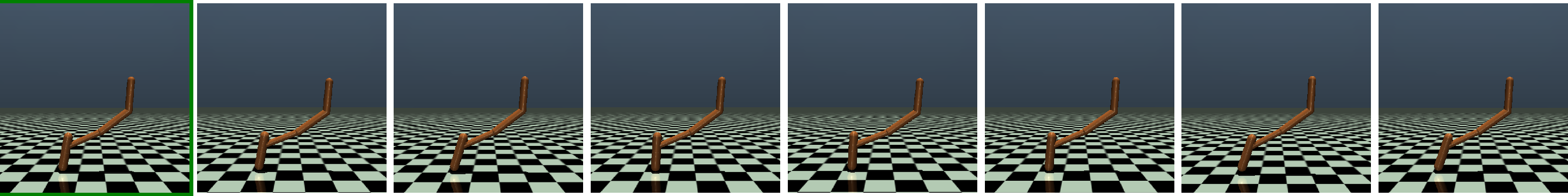}
\caption{State similarity learned by \ploff{} for Hopper on the medium-replay dataset. For each row, the leftmost image is the state for which we compute nearest neighbors in the dataset $\dataset$ for the metric induced by $\Psi$ (ranked by decreasing level of similarity).}
\end{figure}

\begin{figure}
\centering
\includegraphics[width=\linewidth]{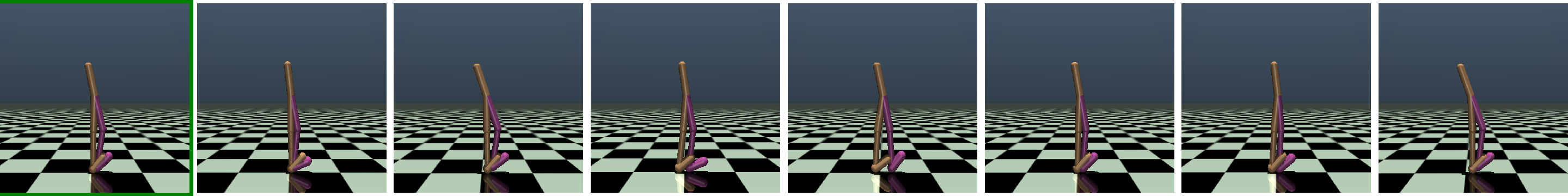}
\includegraphics[width=\linewidth]{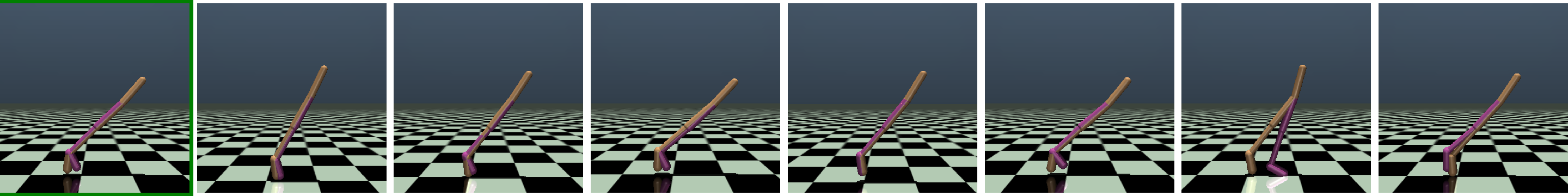}
\includegraphics[width=\linewidth]{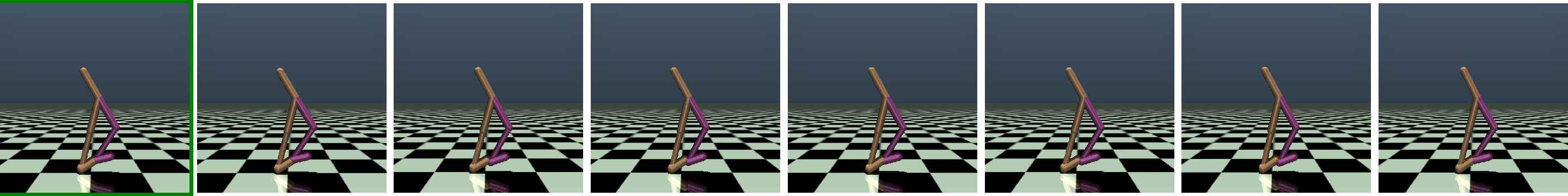}
\includegraphics[width=\linewidth]{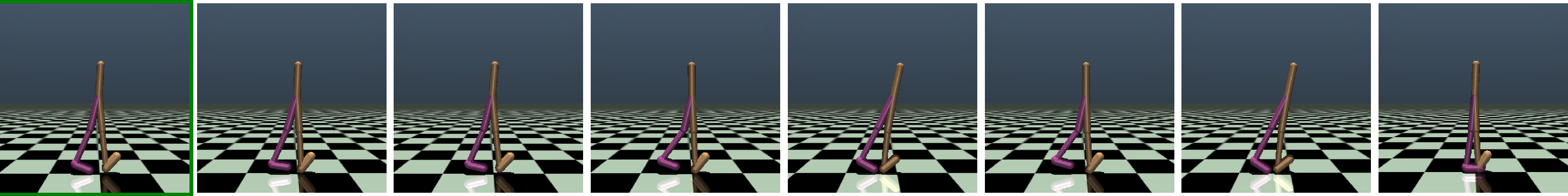}
\includegraphics[width=\linewidth]{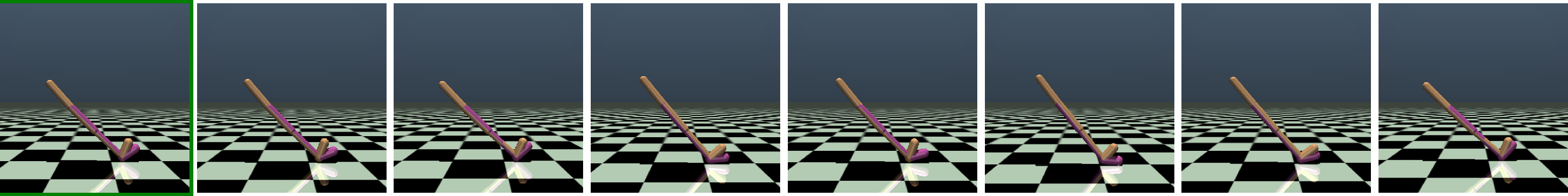}
\includegraphics[width=\linewidth]{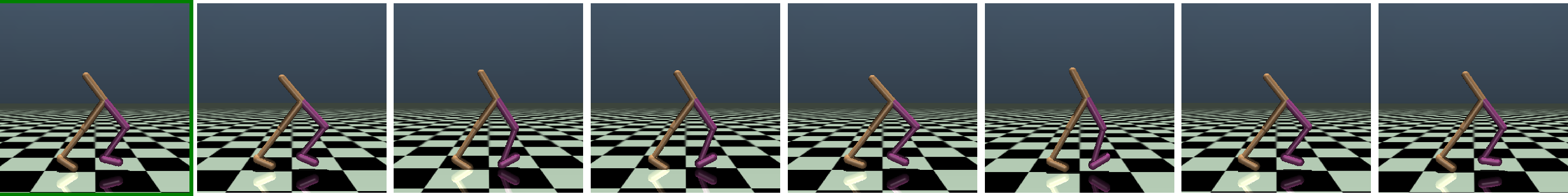}
\includegraphics[width=\linewidth]{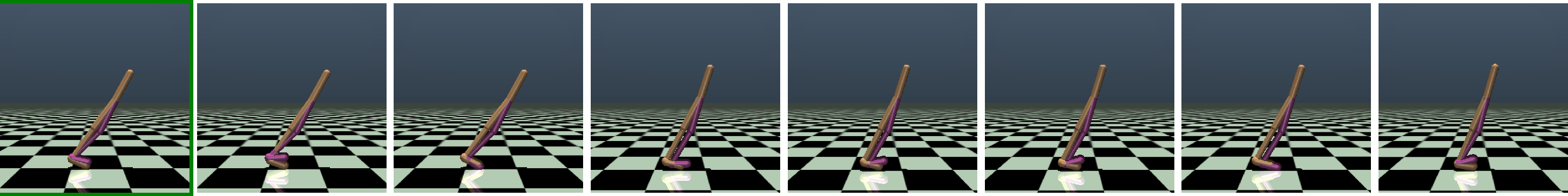}
\includegraphics[width=\linewidth]{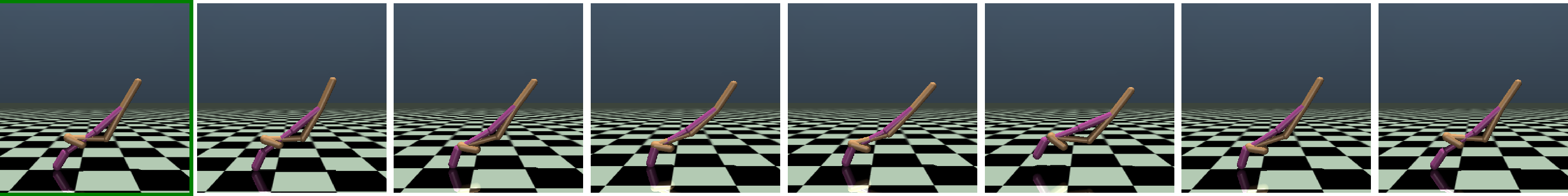}
\caption{State similarity learned by \ploff{} for Walker2d on the medium-replay dataset. For each row, the leftmost image is the state for which we compute nearest neighbors in the dataset $\dataset$ for the metric induced by $\Psi$ (ranked by decreasing level of similarity).}
\end{figure}

\end{document}